\DeclareMathOperator*{\argmax}{\arg\!\max} 
\DeclareMathOperator*{\argmin}{\arg\!\min} 
\DeclareMathOperator{\prox}{prox} 
\newcommand{\trans}{^{\scriptscriptstyle \top}}
\newcommand{\bproof}{\begin{proof}}
\newcommand{\eproof}{\end{proof}}
\newcommand{\beq}{\begin{equation}}
\newcommand{\eeq}{\end{equation}}
\newcommand{\ext}{{\rm {\cal E}}}
\newcommand{\tr}{{\rm tr}}
\newcommand{\R}{{\mathbb R}}
\newcommand{\G}{{\cal G}}
\newcommand{\V}{{\cal V}}
\newcommand{\WW}{{V}}
\newcommand{\ww}{{v}}
\begin{document}
\bibliographystyle{plainnat}
\makeatletter

\renewcommand\proofname{\bf Proof} 
\def\eop{$\rule{1.3ex}{1.3ex}$}
\renewcommand\qedsymbol\eop  
\makeatletter

\newtheorem{theorem}{Theorem}[section]
\newtheorem{proposition}{Proposition}[section]
\newtheorem{lemma}{Lemma}[section]
\newtheorem*{lemma*}{Lemma}
\newtheorem{corollary}{Corollary}[section]
\newtheorem{definition}{Definition}[section]
\newtheorem{remark}{Remark}[section]

\newcommand{\figsheight}{4.0cm}
\renewcommand\baselinestretch{1}

\begin{titlepage}
\advance\topmargin by 0.5in
\begin{center}

{\Huge New Perspectives on $k$-Support and Cluster Norms}
\vspace{.5truecm}

\end{center}

\begin{center}

{\bf Andrew M. McDonald}, {\bf Massimiliano Pontil}, {\bf Dimitris Stamos}

\vspace{.6truecm}

\noindent 
Department of Computer Science \\
University College London \\
Gower Street, London WC1E \\ England, UK \\
E-mail: {\em \{a.mcdonald, m.pontil, d.stamos\}@cs.ucl.ac.uk}

\vspace{.55truecm}

\end{center}

\begin{abstract}
\noindent 
The $k$-support norm is a regularizer which has been successfully applied to sparse vector prediction problems. 
We show that it belongs to a general class of norms which can be formulated as a parameterized infimum over quadratics.  
We further extend the $k$-support norm to matrices, and we observe that it is a special case of the matrix cluster norm. 
Using this formulation we derive an efficient algorithm to compute the proximity operator of both norms.
This improves upon the standard algorithm for the $k$-support norm and allows us to apply proximal gradient methods to the cluster norm. 
We also describe how to solve regularization problems which employ centered versions of these norms. 
Finally, we apply the matrix regularizers to different matrix completion and multitask learning datasets. 
Our results indicate that the spectral $k$-support norm and the cluster norm give state of the art performance on these problems, significantly outperforming trace norm and elastic net penalties.

\vspace{.3truecm}%
\noindent \textbf{Keywords:} 
$k$-Support Norm, Proximal Methods, Regularization, Infimum Convolution, Matrix Completion.
\end{abstract}
\end{titlepage}

\section{Introduction}\label{sec:intro}
We study a family of norms that can be used as regularizers in vector and matrix learning problems.  The norm is obtained by taking an infimum of certain quadratic functions, which are parameterized by a set $\Theta$.  
By varying the set, the regularizer can be tailored to assumptions on the underlying model, which should lead to more accurate learning. 
The norm is defined for $w \in \R^d$, as
\begin{equation}
\Vert w \Vert_{\Theta} = 
\sqrt{ \inf_{\theta \in \Theta} \hspace{.05truecm} \sum_{i=1}^d \frac{w_i^2}{\theta_i} } \label{eqn:theta-primal}
\end{equation}
where $\Theta$ is a convex bounded subset of the positive orthant.   
This family is sufficiently rich to encompasses standard regularizers such as the $\ell_p$ norms \citep{Micchelli2005} for $p \in [1,2]$, the group Lasso \cite{Yuan2006}, Group Lasso with Overlap \citep{Jacob2009}, the norm in \citep{Jacob2009a}, and the structured sparsity norms of \citep{Micchelli2011}. 
Our work builds upon a recent line of papers which considered convex regularizers defined as an infimum problem over a parametric family of quadratics, as well as related infimal convolution problems \citep[see][and references therein]{Jacob2009,Maurer2012,Obozinski2012}. 

In this paper, we focus on the specific case 
\begin{equation}
\Theta =\bigg\{
\theta \in [a,b]^d:\sum_{i=1}^d \theta_i \leq c\bigg\}\label{eqn:theta-abc-def},
\end{equation}
for parameters $0< a < b$ and $c \in [da,db]$, and we refer to the corresponding norm as the box $\Theta$-norm.
As we show, this seemingly simple class nonetheless includes several nontrivial norms such as the recently proposed $k$-support norm \citep{Argyriou2012}.  
This norm is interesting and important as it provides a tight convex relaxation of the cardinality operator on the $\ell_2$ unit ball, and \citep{Argyriou2012} demonstrated that it outperforms the Lasso and the elastic net \citep{Zou2005} on certain data sets. 
We derive an algorithm to compute the proximity operator of the squared box $\Theta$ norm which takes $\mathcal{O}(d \log d)$ time complexity, improving upon the $\mathcal{O}(d(k + \log d))$ complexity in the original work. 

A second important aim of the paper is to extend our formulation to matrix learning regularization, in which the vector norm is applied to the spectrum of a matrix. Following a classical result by von Neumann \citep{VonNeumann1937}, 
we observe that if the set $\Theta$ is invariant under permutations then the formulation \eqref{eqn:theta-primal} can be extended to a orthogonally invariant matrix norm. 
In particular, this allows us to extend the $k$-support norm to a matrix setting. 
In this form we show that it is a special case of the {\em cluster norm} introduced by \citep{Jacob2009a} in the context of clustered multitask learning.  We also describe how to derive the proximity operator of the cluster norm, which enables efficient proximal gradient methods to be applied to this type of matrix learning problems.

In summary, the principal contributions of this paper are:
\begin{itemize}
\item We describe an $O(d \log d)$ method to compute the box $\Theta$-norm and its proximity operator, which allows us to employ optimal first order methods to solve the associated regularization problem;
\item We show that the $k$-support norm is a box $\Theta$-norm and observe that our algorithm to compute the proximity operator improves upon the $\mathcal{O}(d(k+\log d))$ algorithm by \citep{Argyriou2012};
\item  We extend the norm \eqref{eqn:theta-primal} to the matrix setting, showing in particular that the spectral $k$-support spectral norm is a special case of the cluster norm.
\end{itemize}

On the experimental side, we apply these spectral $\Theta$-norms to synthetic and real matrix learning datasets. 
Our findings indicate that the spectral $k$-support norm and the box $\Theta$-norm produce state-of-the art results on the matrix completion benchmarks.  
Furthermore we explain how to perform regularization with centered versions of these norms and we report state of the art results on a multitask learning dataset in this setting. 
Finally we demonstrate numerically the efficiency of our algorithm to compute the proximity operator.

The paper is organized in the following manner.  
In Section \ref{sec:background}, we summarize the properties of the $\Theta$-norm and derive the $k$-support norm formulation.  
In Section \ref{thm:computation-of-theta-norm}, we compute the norm and in Section \ref{sec:computation-of-prox}, we compute the proximity operator of the norm, including the $k$-support norm as a special case. 
In Section \ref{sec:matrix-regularization}, we define the matrix norm and show the cluster norm coincides with the $\Theta$-norm (with a particular choice of parameters $a,b,c$), and in Section \ref{sec:numerics}, we report experiments on simulated and real data sets. 
Finally, in Section \ref{sec:conclusion}, we conclude. 
Derivations that are sketched or missing from the body of the paper are contained in the supplemental material.

\section{Properties of the Norm}
\label{sec:background}
In this section, we highlight some important properties of the class of $\Theta$-norms. We then specialize our observations to the box $\Theta$-norms, showing in particular than it subsumes the $k$-support norm. 

Note that the objective function of problem \eqref{eqn:theta-primal} is strictly convex, hence every minimizing sequence converges to the same point. 
The infimum is, however, not attained in general because a minimizing sequence may converge to a point on the boundary of $\Theta$; for instance, if $\Theta = \{\theta>0 : \sum_{i=1}^d \theta_i \leq 1 \}$, the minimizer is equal to the vector formed by the absolute values of the components of $w$. 
If $\Theta$ is closed, as in the case of \eqref{eqn:theta-abc-def}, then the minimum is attained. 

Our first result establishes that \eqref{eqn:theta-primal} is indeed a norm and derives the form of the dual norm.
\begin{theorem}\label{thm:theta-is-norm}
If $\Theta$ is a convex bounded subset of the positive positive orthant in $\mathbb{R}^d$, then the infimum in equation (\ref{eqn:theta-primal}) defines a norm and the dual norm is given by the formula
\begin{equation}
\Vert u \Vert_{*,\Theta} = \sqrt{  \sup_{\theta \in \Theta}  \hspace{.05truecm}
\sum_{i=1}^d \theta_i u_i^2}. 
\label{eqn:theta-dual}
\end{equation}
\end{theorem}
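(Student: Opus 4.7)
The plan is to set $h(w) := \|w\|_\Theta^2$ and first verify that $\sqrt{h}$ is a norm by exploiting convexity and $2$-homogeneity of $h$, then derive the dual formula by decomposing the unit ball of $\|\cdot\|_\Theta$ as a union of ellipsoids and swapping two suprema.

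The starting observation is that the map $(w_i,\theta_i) \mapsto w_i^2/\theta_i$ is the perspective of $t \mapsto t^2$, hence jointly convex on $\R \times \R_{>0}$. Summing over $i$ and adding the (convex) indicator of $\Theta$ produces a jointly convex $f(w,\theta)$ on $\R^d \times \R^d$, and $h(w) = \inf_\theta f(w,\theta)$ is therefore convex, since the marginal of a jointly convex function over a convex set is convex. Also $h \geq 0$, $h(\alpha w) = \alpha^2 h(w)$, and boundedness of $\Theta \subset [0,M]^d$ gives $h(w) \geq \|w\|_2^2 / M$, so $\sqrt{h}$ is definite and positively $1$-homogeneous. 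For the triangle inequality I would use the homogenization trick: for any $\lambda \in (0,1)$, convexity and $2$-homogeneity yield
\begin{equation*}
h(w_1 + w_2) \leq \lambda\, h(w_1/\lambda) + (1-\lambda)\, h(w_2/(1-\lambda)) = \frac{h(w_1)}{\lambda} + \frac{h(w_2)}{1-\lambda};
\end{equation*}
minimizing the right-hand side over $\lambda$ at $\lambda = \sqrt{h(w_1)}/\big(\sqrt{h(w_1)} + \sqrt{h(w_2)}\big)$ gives the value $(\sqrt{h(w_1)} + \sqrt{h(w_2)})^2$, and taking square roots produces $\|w_1 + w_2\|_\Theta \leq \|w_1\|_\Theta + \|w_2\|_\Theta$.

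For the dual, I would start from $\|u\|_{*,\Theta} = \sup\{\langle u,w\rangle : \|w\|_\Theta \leq 1\}$ and decompose the unit ball as the closure of $\bigcup_{\theta \in \Theta}\{w : \sum_i w_i^2/\theta_i \leq 1\}$. Since $\langle u,\cdot\rangle$ is continuous, taking the sup over this union gives the same value as over its closure, so swapping the two suprema yields
\begin{equation*}
\|u\|_{*,\Theta} \;=\; \sup_{\theta \in \Theta}\;\sup\Big\{\langle u,w\rangle : \sum_i w_i^2/\theta_i \leq 1\Big\} \;=\; \sup_{\theta \in \Theta} \sqrt{\sum_i \theta_i u_i^2},
\end{equation*}
the inner supremum being a Cauchy--Schwarz computation with maximizer $w_i \propto \theta_i u_i$. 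The main technical obstacle I anticipate is the possible non-attainment of the infimum defining $h$, since $\Theta$ is only assumed convex and bounded, not closed. I would handle this either by passing to the closure $\overline{\Theta}$ (with the convention $w_i^2/0 = 0$ when $w_i = 0$ and $+\infty$ otherwise, which leaves the value of $h$ unchanged) or by working throughout with minimizing sequences; the closure step in the dual derivation is precisely where this care is needed.
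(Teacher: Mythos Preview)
Your argument is correct, but it proceeds in the opposite direction from the paper and uses different tools at each step. The paper first verifies that the \emph{dual} expression \eqref{eqn:theta-dual} is a norm (easy, since it is a supremum of weighted $\ell_2$ seminorms) and then recovers the primal formula \eqref{eqn:theta-primal} by computing the Fenchel conjugate of $\tfrac{1}{2}\|\cdot\|_{*,\Theta}^2$: write $\sup_u\{\langle w,u\rangle - \tfrac{1}{2}\sup_\theta \sum_i \theta_i u_i^2\}$, swap the $\sup_u$ and $\inf_\theta$ by a minimax argument, and solve the inner problem in $u$ explicitly. You instead show directly that the \emph{primal} is a norm, via the perspective-function observation for joint convexity and the homogenization trick for the triangle inequality, and then obtain the dual by decomposing the unit ball as (the closure of) a union of ellipsoids and applying Cauchy--Schwarz inside each one. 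Your route is more elementary in that it never invokes Fenchel duality or a saddle-point theorem, and the ellipsoid picture is geometrically transparent; the paper's route is shorter because verifying that a supremum of norms is a norm is immediate, and the minimax step packages the whole dual computation into one line. Your handling of non-attainment (passing to $\overline{\Theta}$ or working with minimizing sequences) is appropriate and matches the care the paper takes informally in the paragraph preceding the theorem.
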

\begin{proof}(Sketch)\footnote{Full derivations can be found in the appendix.}
The objective in \eqref{eqn:theta-dual} is zero only if $u$ is zero, absolutely homogeneous of order one and convex, so defines a norm.  Using the fact that the Fenchel conjugate of $\frac{1}{2}\Vert \cdot \Vert^2$ is $\frac{1}{2}\Vert \cdot \Vert_*^2$ for any norm \citep[see e.g.][]{Boyd2004}, a direct computation produces a separable optimization problem which can be solved explicitly to recover the form of the primal norm. 
\end{proof}
Note the symmetry between the expressions for the primal norm \eqref{eqn:theta-primal} and the dual norm \eqref{eqn:theta-dual}. 
The latter is obtained by replacing the infimum with the supremum and the terms $1/\theta_i$ by $\theta_i$. Note also that the computation of the dual norm involves searching for a supremum of a linear function of $\theta$ over the bounded convex set $\Theta$. 
It follows that the supremum is achieved at an extreme point of the closure of $\Theta$, that is, denoting by $\ext$ the set of such extreme points, the dual norm can be expressed as 
\begin{equation}
\Vert u \Vert_{*,\Theta} =  \sqrt{ \max_{\theta \in \ext}  \hspace{.05truecm}\sum_{i=1}^d \theta_i u_i^2}.
\label{eq:dual-ext}
\end{equation}
 
For the remainder of the paper, unless otherwise noted, we specialize our observations to the box $\Theta$-norm, that is the case where $\Theta$ is defined as in \eqref{eqn:theta-abc-def}. 
We assume throughout that $0 < a < b \leq 1$, which does not lose any generality since by rescaling and shifting $a$ and $b$ (with corresponding adjustment to $c$) we get an equivalent norm. 
We make the change of variable $\gamma_i = \frac{\theta_i - a}{b-a}$ and observe that the constraints on $\theta$ induce the constraint set $\{\gamma: 0 < \gamma_i \leq 1,~\sum_{i=1}^d \gamma_i \leq \rho\}$, where $\rho = \frac{c - da}{b-a}$. Furthermore
$$
\sum_{i=1}^d \theta_i u_i^2 = a \|u\|_2^2 + (b-a) \sum_{i=1}^d \gamma_i u_i^2.
$$
We conclude that 
\begin{align}
\|u\|_{*,\Theta}^2 = a \|u\|_2^2 + (b-a) \sum_{j=1}^k (|u|_j^{\downarrow})^2 + (\rho - k)|u_{k+1}|
\label{eq:dual-abc}
\end{align}
where $|u|^{\downarrow}$ is the vector obtained from $u$ by reordering its components so that they are non-increasing in absolute value, and $k = \lfloor \rho\rfloor$.
The expression simplifies if $\rho = k$, that is $c=(b-a) k + d a$. 
This is exactly the setting considered by \citep{Jacob2009a}, where $k+1$ is 
interpreted as the number of clusters and $d$ as the number of tasks. 
We return to this interesting case in Section \ref{sec:matrix-regularization}, where we explain in detail how to extend the $\Theta$-norm to a matrix setting.




\subsection{The $k$-Support Norm}

We move on to discuss the connection between the box $\Theta$-norms and the $k$-support norm. 
The latter was motivated by \citep{Argyriou2012} as a tight convex relaxation of the cardinality operator on the $\ell_2$ unit ball, and includes the $\ell_1$-norm and $\ell_2$-norm as special cases. 
Its dual is the $\ell_2$-norm of the largest $k$ components, that is
\beq
\Vert u \Vert_{*,(k)} = \sqrt{\sum_{i=1}^k (\vert u_i \vert^{\downarrow})^2}\label{eqn:k-support-dual}
\eeq
where $k \leq d$. 
Comparing \eqref{eqn:k-support-dual} with \eqref{eq:dual-abc}, we see that the $k$-support norm is obtained as a limiting case of the box $\Theta$-norm for $a\rightarrow 0$,  $b=1$, and $c=k$. 
\begin{theorem}\label{thm:k-support-as-theta}
If $\Theta  = \{\theta \in {\mathbb R}^d: 0 < \theta_i \leq 1,~\sum\limits_{j=1}^d \theta_i \leq k\}$ then $\Vert w \Vert_{\Theta}=\Vert w \Vert_{(k)}$.
\end{theorem}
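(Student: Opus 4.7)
The plan is to establish equality of the two norms by computing and matching their duals. By Theorem~\ref{thm:theta-is-norm}, the dual of $\|\cdot\|_{\Theta}$ is given by \eqref{eqn:theta-dual}, and by definition the dual of $\|\cdot\|_{(k)}$ is \eqref{eqn:k-support-dual}. Since the double dual recovers any norm on $\mathbb{R}^d$, it is enough to show that $\|u\|_{*,\Theta} = \|u\|_{*,(k)}$ for every $u$.

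First I would use \eqref{eq:dual-ext} to reduce the supremum in \eqref{eqn:theta-dual} to a maximum over extreme points of the closure $\bar\Theta = \{\theta \in [0,1]^d : \sum_i \theta_i \leq k\}$. A short argument then identifies these extreme points with the binary indicator vectors $\mathbf{1}_S$ for $|S| \leq k$: any coordinate strictly inside $(0,1)$ can be perturbed either individually (if the sum constraint is slack) or against a second fractional coordinate (if the sum constraint is tight), contradicting extremality. Crucially, the integrality of $k$ rules out fractional extreme points on the facet $\sum_i \theta_i = k$, since otherwise perturbations must pair up.

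Evaluating the dual formula at these extreme points yields
$$\|u\|_{*,\Theta}^2 \;=\; \max_{|S| \leq k} \sum_{i \in S} u_i^2 \;=\; \sum_{j=1}^k \big(|u|_j^{\downarrow}\big)^2,$$
which is exactly $\|u\|_{*,(k)}^2$ by \eqref{eqn:k-support-dual}. The conclusion then follows from biduality.

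The main technical point is the extreme-point characterization of $\bar\Theta$. This step can be avoided altogether by specializing equation \eqref{eq:dual-abc}, which is already derived in the paragraph preceding the theorem, to the limiting parameters $a \to 0^+$, $b = 1$, $c = k$. In this limit one has $\rho = k$, so both the $a\|u\|_2^2$ term and the $(\rho - k)$ residual vanish, leaving $\sum_{j=1}^k (|u|_j^{\downarrow})^2$. A small continuity argument is needed to justify pushing $a$ to zero (the primal infimum is no longer attained, but the dual expression is continuous in $a$), and either route produces the same dual norm and hence the same primal norm.
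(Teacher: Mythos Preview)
Your proposal is correct and matches the paper's approach: the paper's justification of the theorem is precisely the sentence preceding its statement, namely that specializing the dual formula \eqref{eq:dual-abc} to $a\to 0$, $b=1$, $c=k$ reproduces \eqref{eqn:k-support-dual}, which is your second route. Your first route via the extreme points of $\bar\Theta$ is also valid and is essentially what the paper sketches around \eqref{eq:dual-ext} and in the Group Lasso with Overlap discussion that follows the theorem.
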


\citep{Argyriou2012} interpreted the $k$-support norm as a special case of the Group Lasso with Overlap introduced by \citep{Jacob2009}, where the groups have support sets of cardinality at most $k$. The latter norm is defined as the infimal convolution
\beq
\Vert w \Vert_{\G} = \inf_{v \in \V(\G)}  \left\{ \sum_{g \in \G} \Vert v_g \Vert_2 : \sum_{g \in \G} v_g = w \right\}\label{eqn:GLO}
\eeq
where $\mathcal{G}$ is a collection of subsets of $\{1,\dots,d\}$ and 
$$\V(\G) = 
\{(v_g)_{g \in \G}: v_g \in \R^d,~{\rm supp}(v_g) \subseteq g,~g \in \G\}.
$$ 
The $k$-support norm corresponds to the case that $\G$ is formed by all subsets containing at most $k$ elements, that is 
$\G = \G_k := \{g : g \subseteq \{1,...,d\},~|g| \leq k\}$. Furthermore, as noted by \citep{Jacob2009}, the dual norm is given by 
$$
\sqrt{ \max_{g \in {\cal G}} \sum_{i\in g} u_i^2}.
$$
Writing $\sum_{\in g} u_i^2 = \sum_{i=1}^d (1_g)_i u_i^2$, where $(1_g)_i = 1$ if $i \in g$ and zero otherwise, we see that the norm \eqref{eqn:GLO} is a (general) $\Theta$-norm, where $\Theta$ is the convex hull of $ \{1_g : g \in {\cal G}\}$.


We end this section with an infimal convolution interpretation of the box $\Theta$-norm.
\begin{proposition}
If $\Theta$ is of the form in equation \eqref{eqn:theta-abc-def} and $c =(b-a) k + d a$, ~for $k \in \{1, \ldots, d\}$, then
$$
\|w\|_\Theta = \inf \left\{
\sum_{g \in \G_k} 
\sqrt{ \hspace{-.05truecm}\sum_{i \in g} \hspace{-.015truecm}\frac{v_{g,i}^2}{b} \hspace{-.015truecm}+\hspace{-.015truecm}\sum_{i \notin g}   \hspace{-.015truecm}\frac{v_{g,i}^2}{a}} \hspace{-.015truecm}: \hspace{-.015truecm}\sum\limits_{g \in \G_k} v_g = w
\hspace{-.03truecm}\right\}.
$$
\label{prop:infconv-abc}
\end{proposition}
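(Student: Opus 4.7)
The plan is to prove the identity by dualization: show that both sides have the same dual norm and then appeal to biduality. I will take the right-hand side as the candidate norm $N(w)$ and identify $N^{*}$ with $\|\cdot\|_{*,\Theta}$.

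First, I would observe that the summand inside the infimum on the right is itself a $\Theta$-norm evaluated at $v_g$, where $\Theta$ is the singleton $\{\theta^g\}$ with $\theta^g_i = b$ for $i \in g$ and $\theta^g_i = a$ otherwise. Denoting this norm by $\|\cdot\|_{\phi_g}$, Theorem~\ref{thm:theta-is-norm} gives immediately that its dual is
$$
\|u\|_{\phi_g,*} = \sqrt{\sum_{i \in g} b\,u_i^2 + \sum_{i \notin g} a\,u_i^2}.
$$
Thus $N$ is the infimal convolution $\square_{g \in \G_k} \|\cdot\|_{\phi_g}$. By the standard duality for infimal convolutions of norms, the unit ball of $N$ is the convex hull of the union of the unit balls $B_{\phi_g}$, and its polar is $\bigcap_g B_{\phi_g,*}$; equivalently, $N^{*}(u) = \max_{g \in \G_k} \|u\|_{\phi_g,*}$.

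Second, I would identify the extreme points of $\Theta$ when $c = (b-a)k + da$. Using the change of variables $\gamma_i = (\theta_i - a)/(b-a)$ already introduced in the paper, $\Theta$ corresponds to the polytope $\{\gamma \in [0,1]^d : \sum_i \gamma_i \leq k\}$. Since $k$ is a positive integer, a short argument shows the extreme points are precisely the $0$-$1$ vectors with at most $k$ ones: any $\gamma$ with a fractional coordinate $\gamma_i \in (0,1)$ admits a perturbation in a second coordinate (either a slack coordinate if $\sum_j \gamma_j < k$, or a second fractional coordinate forced by $\sum_j \gamma_j = k$ being an integer) which contradicts extremality. Translating back, the extreme points of $\Theta$ are exactly the vectors $\{\theta^g : g \in \G_k\}$.

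Putting the pieces together, formula \eqref{eq:dual-ext} then gives
$$
\|u\|_{*,\Theta}^2 = \max_{g \in \G_k} \sum_{i=1}^d \theta^g_i u_i^2 = \max_{g \in \G_k} \Bigl( b\sum_{i \in g} u_i^2 + a\sum_{i \notin g} u_i^2 \Bigr) = N^{*}(u)^2,
$$
so the two norms have identical duals and therefore coincide. I expect the main obstacle to be the characterization of the extreme points of $\Theta$, which critically uses the integrality of $k = \rho$; the duality manipulations in the other two steps are routine once the appropriate singleton-$\Theta$ interpretation of each summand on the right is recognized.
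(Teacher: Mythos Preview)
Your proposal is correct and follows essentially the same route as the paper: identify the extreme points of $\Theta$ (via the same change of variables) as $\{\theta^g : g \in \G_k\}$, invoke the duality between an infimal convolution of norms and the maximum of their duals, and conclude via \eqref{eq:dual-ext} and biduality. The only presentational difference is that the paper packages the infimal-convolution/max duality as a self-contained lemma (Lemma~\ref{lem:max-of-norms}, proved by a direct Cauchy--Schwarz argument with diagonal matrices $T_g = \diag(\sqrt{\theta^g})$), whereas you cite it as a standard fact; your recognition of each summand as a singleton-$\Theta$ norm with dual read off from Theorem~\ref{thm:theta-is-norm} is a clean way to set this up.
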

Note that if $b=1$, then as $a$ tends to zero, we obtain the expression of the $k$-support norm \eqref{eqn:GLO} with $\G = \G_k$, recovering in particular the support constraints.
If $a$ is small and positive, the support constraints are not imposed,  
however, 
most of the weight for each $v_g$ tends to be concentrated on $\textrm{supp}(g)$.
Hence, Proposition \ref{prop:infconv-abc} suggests that the box $\Theta$-norm regularizer will encourage vectors $w$ whose dominant components are a subset of a union of a small number of groups $g \in \G_k$.


\section{Computation of the Norm}
\label{sec:computation-of-theta-norm}
In this section, we compute the box $\Theta$-norm by explicitly solving the optimization problem (\ref{eqn:theta-primal}).
\begin{theorem}\label{thm:computation-of-theta-norm}
For $w \in \mathbb{R}^d$, 
\begin{align}
\Vert w \Vert_{\Theta}^2
&= \frac{1}{b^2}  \Vert w_Q \Vert_2^2  + 
\frac{1}{p} \Vert w_I \Vert_1^2 +
\frac{1}{a^2} \Vert w_L \Vert_2^2 \label{eqn:solution-of-abc-norm-2},
\end{align}
where 
$w_Q = (\vert w\vert^{\downarrow}_{1}, \ldots, \vert w\vert^{\downarrow}_{q})$, $x_I = (\vert w\vert^{\downarrow}_{q+1}, \ldots, \vert w\vert^{\downarrow}_{d-\ell})$, $w_L = (\vert w\vert^{\downarrow}_{d-\ell+1}, \ldots, \vert w\vert^{\downarrow}_{d})$, 
$q$ and $\ell$ are the unique integers in $\{0,\ldots, d\}$, satisfying $q+\ell\leq d$ and
\begin{align}
\frac{\vert w_q \vert}{b} &\geq \frac{1}{p}\sum_{i=q+1}^{d-\ell}\vert w_i \vert > \frac{\vert w_{q+1} \vert}{b},\label{eqn:optimal-q}\\
\frac{\vert w_{d-\ell} \vert}{a} &\geq \frac{1}{p}\sum_{i=q+1}^{d-\ell}\vert w_i \vert > \frac{\vert w_{d-\ell+1} \vert}{a}, \label{eqn:optimal-ell}
\end{align}
where $p=c-qb-\ell a$ and we have defined $\vert w_{0} \vert=\infty$ and $\vert w_{d+1} \vert = 0$.
\end{theorem}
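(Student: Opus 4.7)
The plan is to solve the strictly convex minimization \eqref{eqn:theta-primal} explicitly via its Karush--Kuhn--Tucker (KKT) system and to use the monotone ordering of the $|w_i|$ to read off the block structure of the optimizer. Since the objective $\theta \mapsto \sum_i w_i^2/\theta_i$ is strictly convex on the positive orthant and $\Theta$ is nonempty, compact, and convex, the infimum is attained at a unique point $\theta^\ast \in \Theta$. Because the objective is strictly decreasing in each $\theta_i$, the budget constraint $\sum_i\theta_i\leq c$ must be tight at $\theta^\ast$ except in the corner case $c=db$, where $\theta^\ast=b\mathbf{1}$ and the claimed formula reduces trivially to $\|w\|_2^2/b$. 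By permutation invariance of $\Theta$ I may reorder coordinates so that $|w_1|\geq |w_2|\geq\cdots\geq |w_d|$.

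Introducing a multiplier $\alpha\geq 0$ for the (now active) budget constraint and multipliers for the box constraints and writing stationarity, the KKT conditions collapse to three mutually exclusive regimes for each index $i$: an interior index (with $a<\theta_i^\ast<b$) satisfies $\theta_i^\ast = |w_i|/\sqrt{\alpha}$; an index at the upper box satisfies $|w_i|\geq b\sqrt{\alpha}$; and an index at the lower box satisfies $|w_i|\leq a\sqrt{\alpha}$. Combined with the monotone ordering of $|w_i|$, this forces $\theta_i^\ast = b$ on an initial block $Q=\{1,\ldots,q\}$, $\theta_i^\ast = a$ on a terminal block $L=\{d-\ell+1,\ldots,d\}$, and $\theta_i^\ast = |w_i|/\sqrt{\alpha}$ on the middle block $I=\{q+1,\ldots,d-\ell\}$, for some integers $q,\ell\geq 0$ with $q+\ell\leq d$. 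Substituting this block form into the tight budget $qb+\sum_{i\in I}|w_i|/\sqrt{\alpha}+\ell a = c$ determines the multiplier as $\sqrt{\alpha} = \|w_I\|_1/p$ with $p = c-qb-\ell a$, and the KKT regime inequalities $|w_q|/b\geq\sqrt{\alpha}>|w_{q+1}|/b$ and $|w_{d-\ell}|/a\geq\sqrt{\alpha}>|w_{d-\ell+1}|/a$ translate verbatim into \eqref{eqn:optimal-q}--\eqref{eqn:optimal-ell}; the boundary conventions $|w_0|=\infty$ and $|w_{d+1}|=0$ absorb the degenerate cases $q=0$ and $\ell=0$ (and, together, $q+\ell=d$).

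The one remaining task, which I expect to be the principal obstacle, is to show that a pair $(q,\ell)\in\{0,\ldots,d\}^2$ with $q+\ell\leq d$ that jointly satisfies \eqref{eqn:optimal-q} and \eqref{eqn:optimal-ell} exists and is unique. Existence is automatic because the unique minimizer $\theta^\ast$ induces such a pair. For uniqueness I would proceed by contradiction: any two admissible pairs would produce two distinct candidate values of $\sqrt{\alpha}=\|w_I\|_1/p$, and by the monotone sort of the $|w_i|$ these would demand incompatible block assignments, contradicting the uniqueness of $\theta^\ast$. The mixed weak/strict form of the inequalities is chosen so that ties among the $|w_i|$ are broken consistently, which is the place where one has to be careful.

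Once $(q,\ell)$ is pinned down, the proof concludes by substituting the closed form of $\theta^\ast$ back into $\sum_i w_i^2/\theta_i^\ast$ and grouping by blocks: the upper-box block contributes $\sum_{i\in Q} w_i^2/b$, the middle block contributes $\sqrt{\alpha}\,\|w_I\|_1=\|w_I\|_1^2/p$, and the lower-box block contributes $\sum_{i\in L} w_i^2/a$. These three terms assemble into the displayed expression \eqref{eqn:solution-of-abc-norm-2}, completing the derivation.
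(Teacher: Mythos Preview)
Your proposal is correct and follows essentially the same route as the paper: both dualize the budget constraint, use the monotone ordering of the $|w_i|$ to force the three-block structure $\theta^\ast=(b,\ldots,b,\theta_{q+1},\ldots,\theta_{d-\ell},a,\ldots,a)$, solve for the multiplier from the tight budget, and read off \eqref{eqn:optimal-q}--\eqref{eqn:optimal-ell} from the regime boundaries. The only cosmetic difference is that the paper keeps the box constraints explicit and minimizes the partial Lagrangian over $[a,b]^d$ (via a separate lemma), whereas you fold everything into a single KKT system; your more careful treatment of existence/uniqueness of $(q,\ell)$ is something the paper leaves implicit in the uniqueness of $\theta^\ast$.
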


\begin{proof}(Sketch)
We need to solve the optimization problem
\beq
\inf_{\theta} \bigg\{ \sum_{i=1}^d \frac{w_i^2}{\theta_i} : a \leq \theta_i \leq b, \sum_{i=1}^d \theta_i \leq c \bigg\}.  
\label{eqn:norm-objective}
\eeq
Without loss of generality we assume that the $w_i$ are ordered non increasing in absolute values, and it follows that at the optimum the $\theta_i$ are also ordered non increasing (see Lemma \ref{lem:ordering}).  We further assume without loss of generality that $w_i \ne 0$ for all $i$ and $c \leq db$, so the sum constraint will be tight at the optimum (see Remark \ref{rem:wi-not-zero}).

The Lagrangian is given by 
\begin{align*}
L(\theta,\alpha) &= \sum_{i=1}^d \frac{w_i^2}{\theta_i}  + \frac{1}{\alpha^2} \left( \sum_{i=1}^d \theta_i-c\right)
\end{align*}
where $1/\alpha^2$ is a strictly positive multiplier to be chosen such that $S(\alpha) := \sum_{i=1}^d \theta_i =c$.
We can then solve the original problem by minimizing the Lagrangian over the constraint $\theta \in [a,b]^d$. Due to the coupling effect of the multiplier, we can solve the simplified problem componentwise using Lemma \ref{lem:mmp-box-solution}, obtaining the solution
\begin{align}
\theta_i = 
\begin{cases}
a, \quad &\text{if} \quad \alpha < \frac{a}{\vert w_i \vert},\\
\alpha \vert w_i \vert, \quad &\text{if} \quad \frac{a}{\vert w_i \vert}\leq \alpha \leq \frac{b}{\vert w_i \vert} ,\\
b, \quad &\text{if}\quad  \alpha > \frac{b}{\vert w_i \vert} ,
\end{cases}\label{eqn:form-of-optimal-theta-abc}
\end{align}
where $S(\alpha) = c$. 
The minimizer has the form $\theta = (b, \ldots, b, \theta_{q+1}, \ldots, \theta_{d-\ell},a, \ldots, a)$, 
where $q,\ell$ are determined by the value of $\alpha$. 
From $S(\alpha) = c$ we get $\alpha = p / (\sum_{i=q+1}^{d-\ell}\vert w_i \vert)$.
The value of the norm in \eqref{eqn:solution-of-abc-norm-2} follows by substituting $\theta$ into the objective. 
Finally, by construction we have $\theta_q \geq b >\theta_{q+1}$ and $\theta_{d-\ell} > a \geq \theta_{d-\ell+l}$, implying \eqref{eqn:optimal-q} and \eqref{eqn:optimal-ell}. 
\end{proof}

Theorem \ref{thm:computation-of-theta-norm} suggests two methods for computing the box $\Theta$-norm.  
First we find $\alpha$ such that $S(\alpha)=c$; this value uniquely determines $\theta$ in \eqref{eqn:form-of-optimal-theta-abc}, and the norm follows by substitution into \eqref{eqn:norm-objective}.
Alternatively we identify $q$ and $\ell$ that jointly satisfy \eqref{eqn:optimal-q} and \eqref{eqn:optimal-ell} and we compute the norm using  \eqref{eqn:solution-of-abc-norm-2}.  
Taking advantage of the structure of $\theta$ we now show that the first method leads to a computation time that is $\mathcal{O}( d \log d)$.

\begin{theorem}\label{thm:theta-norm-d-log-d}
The computation of the $\Theta$-norm can be completed in $\mathcal{O}( d \log d)$ time. 
\end{theorem}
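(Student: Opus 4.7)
The plan is to solve problem \eqref{eqn:norm-objective} via the first method sketched after Theorem \ref{thm:computation-of-theta-norm}: find the unique $\alpha > 0$ with $S(\alpha) = c$, where $S(\alpha) := \sum_{i=1}^d \theta_i(\alpha)$ and $\theta_i(\alpha)$ is as in \eqref{eqn:form-of-optimal-theta-abc}, then substitute the resulting $\theta$ into \eqref{eqn:solution-of-abc-norm-2} to read off the norm. The total cost will be dominated by a single sort, and all remaining operations will be linear in $d$.

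The preprocessing step is to sort $|w|$ in non-increasing order into $|w|^{\downarrow}$ in $\mathcal{O}(d \log d)$ time, and to precompute the prefix sums $\Sigma_j = \sum_{i=1}^j |w|^{\downarrow}_i$ in $\mathcal{O}(d)$. Because $|w|^{\downarrow}$ is already sorted, the two breakpoint lists $(b/|w|^{\downarrow}_i)_i$ and $(a/|w|^{\downarrow}_i)_i$ are each in non-decreasing order and can be merged into a single sorted sequence of at most $2d$ breakpoints in $\mathcal{O}(d)$. Each $\theta_i(\alpha)$ from \eqref{eqn:form-of-optimal-theta-abc} is continuous, non-decreasing and piecewise linear in $\alpha$ with breakpoints $a/|w_i|$ and $b/|w_i|$; hence $S$ inherits these three properties. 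Since $|w|$ is sorted, for every $\alpha > 0$ the set $\{i : \theta_i(\alpha) = b\}$ is an initial segment $\{1, \ldots, q\}$ and $\{i : \theta_i(\alpha) = a\}$ is a tail segment $\{d - \ell + 1, \ldots, d\}$, so that
\[
S(\alpha) \;=\; qb \,+\, \alpha\,(\Sigma_{d-\ell} - \Sigma_q) \,+\, \ell a,
\]
which, once $q$ and $\ell$ are known, is computable in $\mathcal{O}(1)$ from the prefix sums.

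I would then sweep through the merged sorted list of breakpoints, maintaining $q$ and $\ell$ with an $\mathcal{O}(1)$ update at each step: crossing a threshold of the form $b/|w|^{\downarrow}_i$ increments $q$, while crossing one of the form $a/|w|^{\downarrow}_i$ decrements $\ell$. Evaluating $S$ at each breakpoint and exploiting monotonicity identifies, in $\mathcal{O}(d)$ total time, two consecutive breakpoints $\alpha_L \le \alpha_R$ with $S(\alpha_L) \le c \le S(\alpha_R)$. On the open interval $(\alpha_L, \alpha_R)$ the integers $q$ and $\ell$ are constant and $S$ is affine, so the equation $S(\alpha) = c$ has the closed-form solution $\alpha = p/(\Sigma_{d-\ell} - \Sigma_q)$ with $p = c - qb - \ell a$; substituting the resulting $\theta$ into \eqref{eqn:solution-of-abc-norm-2} returns $\|w\|_\Theta^2$ in $\mathcal{O}(d)$.

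The asymptotic accounting is routine; the only subtleties lie in the edge cases. A root of $S(\alpha) = c$ exists because $S$ is continuous and non-decreasing with $S(0^+) = da \le c \le db = \lim_{\alpha \to \infty} S(\alpha)$. Uniqueness and the non-vanishing of the denominator $\Sigma_{d-\ell} - \Sigma_q$ in the identified interval follow from the assumption $w_i \ne 0$ already invoked in the proof of Theorem \ref{thm:computation-of-theta-norm} (zero components are stripped at preprocessing). Coincident breakpoints pose no difficulty since $S$ is monotone, so any root that happens to fall exactly at a breakpoint is recovered from an evaluation already made during the sweep. Summing the costs of the sort, the merge, the sweep and the final substitution gives the advertised $\mathcal{O}(d \log d)$ bound.
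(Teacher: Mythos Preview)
Your proposal is correct and follows the same high-level strategy as the paper: exploit that $S(\alpha)$ is continuous, non-decreasing, and piecewise linear with at most $2d$ breakpoints, locate the segment containing the root, and interpolate. The difference lies in how you locate that segment. The paper sorts the $2d$ breakpoints directly and then performs a binary search, evaluating $S$ from scratch in $\mathcal{O}(d)$ at each of the $\mathcal{O}(\log d)$ probes, so the search itself costs $\mathcal{O}(d \log d)$. You instead sort $|w|$, obtain the two already-sorted breakpoint lists, merge them in $\mathcal{O}(d)$, and then do a single linear sweep, using prefix sums and incremental updates of $q$ and $\ell$ to evaluate $S$ in $\mathcal{O}(1)$ at every breakpoint. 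Your search phase is therefore $\mathcal{O}(d)$ rather than $\mathcal{O}(d \log d)$, though this refinement is invisible in the final bound since both approaches are dominated by the initial sort. Your handling of the edge cases (existence of a root, zero components, coincident breakpoints) is also more explicit than the paper's.
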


\begin{proof}
Following Theorem \ref{thm:computation-of-theta-norm}, we need to choose $\alpha^*$ to satisfy the coupling constraint $S(\alpha^*)  = c$.
Each component of $\theta$ is a piecewise linear function in the form of a step function with a constant slope between the values $a$ and $b$.  
Let the set $\left\{ \alpha^{i} \right\}_{i=1}^{2d}$ be the set of the $2d$ critical points, where the $\alpha^{i}$ are ordered non decreasing.  
The function $S(\alpha)$ is a non decreasing piecewise linear function with at most $2d$ critical points.
 
We can find $\alpha^*$ by first sorting the points $\{ \alpha^i \}$, finding $\alpha^i$ and $\alpha^{i+1}$ such that $S(\alpha^i) \leq c \leq S(\alpha^{i+1})$ by binary search, and then interpolating $\alpha^*$ between the two points. 

Sorting takes $\mathcal{O}(d \, \log d)$.  
Computing $S(\alpha^i)$ at each step of the binary search is $\mathcal{O}(d )$, so $\mathcal{O}(d \, \log d)$ overall.  
Given $\alpha^i$ and $\alpha^{i+1}$, interpolating $\alpha^*$ is $\mathcal{O}(d)$,  so the algorithm overall is $\mathcal{O}(d \, \log d)$ as claimed.
\end{proof}

As outlined earlier, the $k$-support norm is a special case of the box $\Theta$-norm.  
We have the following corollary of Theorem \ref{thm:computation-of-theta-norm} and Theorem \ref{thm:theta-norm-d-log-d}.

\begin{corollary}
For $w \in \mathbb{R}^d$, and $k\leq d$, 
\begin{align*}
\Vert w \Vert_{(k)} &=  \Bigg( \sum_{j=1}^q \vert w^{\downarrow}_j \vert^2  + 
\frac{1}{k-q} \Big(\sum_{j=q+1}^d \vert w^{\downarrow}_j \vert\Big)^2 \Bigg)^{\frac{1}{2}},
\end{align*}
where $q$ is the unique integer in $\{0, k-1\}$ satisfying
\begin{align}
\vert w_{q} \vert \geq \frac{1}{k-q} \sum_{j=q+1}^d \vert w_{j} \vert > \vert w_{q+1} \vert, \label{eqn:k-sup-optimal-q}
\end{align}
and we have defined $w_0=\infty$.  Furthermore, the norm can be computed in $\mathcal{O}(d \log d)$ time. 
\end{corollary}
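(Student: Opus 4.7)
The plan is to obtain the corollary by specializing Theorem \ref{thm:computation-of-theta-norm} (and Theorem \ref{thm:theta-norm-d-log-d}) to the parameter regime identified in Theorem \ref{thm:k-support-as-theta}, namely $b=1$, $c=k$, and $a \to 0^+$. Concretely, I would work with a general $0 < a < 1$, apply Theorem \ref{thm:computation-of-theta-norm} to obtain the triple-block decomposition $\Vert w \Vert_\Theta^2 = \Vert w_Q\Vert_2^2 + \tfrac{1}{p}\Vert w_I\Vert_1^2 + \tfrac{1}{a^2}\Vert w_L\Vert_2^2$ with $p = k - qb - \ell a$, and then show that as $a \downarrow 0$ the parameters $q$ and $\ell$ stabilize in a way that produces the stated formula.

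The key step is to argue that $\ell = 0$ in the limit. From condition \eqref{eqn:optimal-ell}, we need $|w_{d-\ell+1}|/a < \tfrac{1}{p}\sum_{i=q+1}^{d-\ell}|w_i|$; for any $\ell \geq 1$ the left-hand side blows up as $a \to 0$ (assuming nonzero entries; zero entries can be handled separately by Remark \ref{rem:wi-not-zero}), so this inequality forces $\ell = 0$, while the complementary inequality $|w_{d-\ell}|/a \geq \tfrac{1}{p}\sum |w_i|$ becomes vacuous (with the convention $|w_0| = \infty$ and $d - 0 = d$). With $\ell = 0$ the $w_L$ block disappears from the sum, $p$ collapses to $k - q$, and substituting $b = 1$ into \eqref{eqn:solution-of-abc-norm-2} yields exactly the claimed expression. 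The inequality \eqref{eqn:optimal-q}, after setting $b = 1$ and using $p = k - q$, becomes \eqref{eqn:k-sup-optimal-q}.

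Next I would verify that $q$ ranges over $\{0, \ldots, k-1\}$. Since $p = k - q$ must be positive (otherwise the $w_I$ term is undefined or the original Lagrange multiplier $1/\alpha^2$ degenerates), we must have $q \leq k-1$; the lower bound $q \geq 0$ is immediate. Uniqueness of $q$ then follows from the strict inequality on the right of \eqref{eqn:k-sup-optimal-q}, which is inherited from the strict inequality in \eqref{eqn:optimal-q}, together with the monotonicity of $|w_{q+1}|^\downarrow$ versus the running average.

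Finally, the $\mathcal{O}(d \log d)$ bound is inherited directly from Theorem \ref{thm:theta-norm-d-log-d}, since the $k$-support norm is simply the box $\Theta$-norm at the specific parameter choice; the algorithm of that theorem applies verbatim. The main (minor) obstacle is handling the $a \downarrow 0$ limit rigorously, in particular justifying that the minimizer in \eqref{eqn:theta-primal} is attained in the limit and that the closed-form value passes to the limit continuously; this is straightforward because with $\ell = 0$ the objective \eqref{eqn:solution-of-abc-norm-2} no longer depends on $a$, so the limit is simply the value of the limiting expression.
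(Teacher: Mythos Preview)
Your proposal is correct and follows essentially the same route as the paper: specialize Theorem~\ref{thm:computation-of-theta-norm} with $b=1$, $c=k$, $a\to 0$, observe that condition~\eqref{eqn:optimal-ell} then forces $\ell=0$ (the paper simply says this condition ``is no longer required''), and inherit the complexity from Theorem~\ref{thm:theta-norm-d-log-d}. Your write-up just fills in the details behind the paper's one-line proof.
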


\begin{proof}
The first claim follows from Theorem \ref{thm:computation-of-theta-norm} by letting $a \rightarrow 0$, $b=1$, $c=k$ and noting that equation \eqref{eqn:optimal-ell} is no longer required.  The complexity follows from Theorem \ref{thm:theta-norm-d-log-d}.
\end{proof}


\section{Proximity Operator}\label{sec:computation-of-prox}

Proximal gradient methods can be used to efficiently solve optimization problems of the form $\min_{w \in \mathbb{R}^d} f(w) + \lambda g(w)$, 
where $f$ is a convex loss function with Lipschitz continuous gradient, $\lambda >0$ is a regularization parameter, and $g$ is a convex function for which the proximity operator can be computed efficiently \citep[see][and references therein]{Combettes2010, Beck2009, Nesterov2007}. 
The proximity operator of $g$ with parameter $\rho>0$ is defined as 
\begin{align*}
\prox_{\rho g} (w) = \argmin_{x \in\R^d} \left\{\frac{1}{2}\Vert x-w\Vert^2 + \rho g(x)\right\}.
\end{align*}
We now use the infimum formulation of the box $\Theta$-norm to derive the proximity operator of the squared norm.

\begin{theorem}\label{thm:prox-of-general-theta-norm}
The proximity operator of the square of the box $\Theta$-norm at point $w \in \R^d$ with parameter $\frac{\lambda}{2}$ is given by
$\prox_{\frac{\lambda}{2}\Vert \cdot \Vert_{\Theta}^2}(w) = (\frac{\theta_1 w_1}{\theta_1+\lambda},\dots, \frac{\theta_d w_d}{\theta_d+\lambda})$, where
\begin{align}
\theta_i &= 
\begin{cases}
    		a, & \text{if} \, \alpha  < \frac{a+\lambda}{\vert w_i \vert}, \\
    		\alpha \vert w_i \vert - \lambda, & \text{if} \,  
    		\frac{a+\lambda}{\vert w_i \vert}\leq \alpha \leq \frac{b+\lambda}{\vert w_i \vert}\\
    		b, & \text{if} \, \alpha  > \frac{b+\lambda}{\vert w_i \vert} ,
  	\end{cases} \label{eqn:cluster-theta}
\end{align}
and $\alpha$ is chosen such that $S(\alpha) = \sum_{i=1}^d \theta_i = c$. 
\end{theorem}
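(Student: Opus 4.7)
The plan is to reduce the proximity computation to the same structural problem solved in Theorem \ref{thm:computation-of-theta-norm}, by exploiting the variational formula \eqref{eqn:theta-primal} for the norm. Substituting the infimum definition of $\|x\|_\Theta^2$ into the definition of the proximity operator turns the problem into the joint minimization
\begin{equation*}
\min_{x \in \R^d,\, \theta \in \Theta} \left\{ \tfrac{1}{2}\|x-w\|_2^2 + \tfrac{\lambda}{2} \sum_{i=1}^d \frac{x_i^2}{\theta_i}\right\}.
\end{equation*}
Since the objective is jointly convex in $(x,\theta)$ on the closed convex set $\R^d \times \Theta$ (with $\Theta$ given by \eqref{eqn:theta-abc-def}), the order of minimization may be swapped.

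For fixed $\theta \in \Theta$, the inner minimization over $x$ is separable and strictly convex in each coordinate; setting the derivative to zero gives $x_i = \theta_i w_i/(\theta_i+\lambda)$, which is precisely the coordinate-wise formula in the statement. A short algebraic simplification shows that substituting this back into the objective produces (up to the constant factor $\lambda/2$) the reduced problem
\begin{equation*}
\min_{\theta \in \Theta}\; \sum_{i=1}^d \frac{w_i^2}{\theta_i+\lambda}.
\end{equation*}
Thus the proximity operator is completely determined once this reduced problem is solved for $\theta$.

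The final step is to solve the reduced problem, which has exactly the same structure as the norm computation in Theorem \ref{thm:computation-of-theta-norm}, the only change being that the variable $\theta_i$ is replaced by $\theta_i+\lambda$ (equivalently, the substitution $\tilde\theta_i = \theta_i + \lambda$ yields box constraints $[a+\lambda,b+\lambda]$ and an affine-shifted sum constraint). I would therefore introduce a Lagrange multiplier $1/\alpha^2$ for the linear constraint $\sum_i \theta_i \le c$, which (as in the proof of Theorem \ref{thm:computation-of-theta-norm}) will be active under the standing assumption that $c \le db$. The stationarity condition $-w_i^2/(\theta_i+\lambda)^2 + 1/\alpha^2 = 0$ yields the unconstrained optimum $\theta_i = \alpha|w_i|-\lambda$; projecting onto the box $[a,b]$ componentwise then gives the three cases of \eqref{eqn:cluster-theta}, with $\alpha$ fixed uniquely by the scalar equation $S(\alpha)=c$.

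The main obstacle I anticipate is the interchange of minimizations and the verification that the componentwise projection step is valid despite the coupling through $\alpha$. The interchange is legitimate because the objective is jointly convex, $\Theta$ is closed (so the infimum in \eqref{eqn:theta-primal} is attained) and the Cartesian product of feasible sets is closed and convex; the componentwise decoupling then follows from the same argument used for the Lagrangian in Theorem \ref{thm:computation-of-theta-norm}, since for any candidate value of $\alpha>0$ the problem splits across coordinates. A minor edge case to handle is $w_i = 0$, for which both the $x$-update and the $\theta$-update reduce trivially; and one should note that $\alpha$ is well defined since $S(\alpha)$ is continuous and nondecreasing in $\alpha$, moving from $da$ to $db$ as $\alpha$ ranges over $(0,\infty)$, which guarantees a unique crossing of the level $c \in [da,db]$.
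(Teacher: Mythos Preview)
Your proposal is correct and follows essentially the same route as the paper: substitute the variational definition \eqref{eqn:theta-primal}, exchange the order of minimization, solve the separable $x$-problem to obtain $x_i=\theta_i w_i/(\theta_i+\lambda)$, reduce to $\min_{\theta\in\Theta}\sum_i w_i^2/(\theta_i+\lambda)$, and then handle the sum constraint via a Lagrange multiplier $1/\alpha^2$ so that the problem decouples and Lemma~\ref{lem:mmp-box-solution} (applied with the shift $\theta_i\mapsto\theta_i+\lambda$) gives exactly \eqref{eqn:cluster-theta}. Your additional remarks on joint convexity, the $w_i=0$ edge case, and the monotonicity of $S(\alpha)$ are all consistent with the paper's treatment (cf.\ Remark~\ref{rem:wi-not-zero} and the proof of Theorem~\ref{thm:theta-norm-d-log-d}).
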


\begin{proof}
Similarly to Remark \ref{rem:wi-not-zero} for Theorem \ref{thm:computation-of-theta-norm}, we can assume without loss of generality that $w_i\ne 0$. 
Using the infimum formulation of the norm, and noting that the infimum is attained, we solve
\begin{align*}
\min_{x\in \mathbb{R}^d} \min_{\theta \in \Theta} \, \left\{\frac{\lambda}{2}\sum_{i=1}^d \frac{x_i^2}{\theta_i} + \frac{1}{2}\sum_{i=1}^d (x_i-w_i)^2 \right\}.
\end{align*}
We can exchange the order of the optimization and solve for $x$ first.  
The problem is separable we find $x_i = \frac{\theta_i w_i}{\theta_i + \lambda}$. 
Discarding a multiplicative factor of $\lambda/2$, the problem in $\theta$ becomes
$$
\min_{\theta} \bigg\{ 
\sum_{i=1}^d \frac{ w_i^2}{\theta_i+\lambda} : {a \leq \theta_i \leq b, \sum_{i=1}^d \theta_i \leq c} \,\bigg\}
$$
and we can equivalently solve  
$$
\min_{a \leq \theta_i \leq b} 
\left\{\sum_{i=1}^d \frac{ w_i^2}{\theta_i+\lambda} + \frac{1}{\alpha^2}  \left( \sum_{i=1}^d \theta_i - c \right)\right\},
$$
where $\frac{1}{\alpha^2}$ is a strictly positive Lagrange multiplier and $\alpha$ is chosen to ensure that $S(\alpha)=c$.  
The problem is separable and decouples, 
and by Lemma \ref{lem:mmp-box-solution} the solution is given by
\begin{align*}
\theta_i &= 
\begin{cases}
    		a, & \text{if} \, \alpha  < \frac{a+\lambda}{\vert w_i \vert}, \\
    		\alpha \vert w_i \vert - \lambda, & \text{if} \,  
    		\frac{a+\lambda}{\vert w_i \vert}\leq \alpha \leq \frac{b+\lambda}{\vert w_i \vert}\\
    		b, & \text{if} \, \alpha  > \frac{b+\lambda}{\vert w_i \vert} ,
  	\end{cases}
\end{align*}
where $\alpha$ is chosen such that $S(\alpha) = c$. 
\end{proof}

By the same reasoning as for Theorem \ref{thm:theta-norm-d-log-d} we have the following. 
\begin{theorem}\label{thm:prox-d-log-d}
The computation of the proximity operator can be completed in $\mathcal{O}( d \log d)$ time. 
\end{theorem}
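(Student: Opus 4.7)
The plan is to mirror the argument used in the proof of Theorem \ref{thm:theta-norm-d-log-d}, exploiting the fact that $\theta$ in Theorem \ref{thm:prox-of-general-theta-norm} has exactly the same piecewise-linear structure in $\alpha$ as the $\theta$ appearing in the norm computation, just with the breakpoints shifted by $\lambda$.

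First I would observe that, by \eqref{eqn:cluster-theta}, each coordinate $\theta_i$ is a non-decreasing piecewise-linear function of $\alpha$ with exactly two breakpoints, located at $\alpha = (a+\lambda)/|w_i|$ and $\alpha = (b+\lambda)/|w_i|$; it is constant at $a$ before the first breakpoint, linear in between, and constant at $b$ after the second. Consequently $S(\alpha) = \sum_{i=1}^d \theta_i(\alpha)$ is a non-decreasing piecewise-linear function with at most $2d$ breakpoints, so the coupling equation $S(\alpha^\ast) = c$ has a unique solution in the interval $[\min_i (a+\lambda)/|w_i|,\max_i (b+\lambda)/|w_i|]$ (assuming, as in Theorem \ref{thm:computation-of-theta-norm}, that $c < db$; the corner case $c = db$ gives $\theta = (b,\dots,b)$ trivially).

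Next I would describe the algorithm: collect the $2d$ breakpoints and sort them in $\mathcal{O}(d\log d)$ time; perform a binary search over the sorted list to locate consecutive breakpoints $\alpha^{(j)}, \alpha^{(j+1)}$ such that $S(\alpha^{(j)}) \le c \le S(\alpha^{(j+1)})$, evaluating $S$ at each probed breakpoint in $\mathcal{O}(d)$ time for a total of $\mathcal{O}(d\log d)$; then, since $S$ is linear between consecutive breakpoints, interpolate to obtain $\alpha^\ast$ in $\mathcal{O}(d)$ time. Once $\alpha^\ast$ is known, a single pass over the coordinates recovers $\theta$ via \eqref{eqn:cluster-theta} and then the proximity operator via $(\prox_{\frac{\lambda}{2}\|\cdot\|_\Theta^2}(w))_i = \theta_i w_i/(\theta_i + \lambda)$, all in $\mathcal{O}(d)$ time. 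Summing, the total cost is $\mathcal{O}(d\log d)$.

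There is no real obstacle in this proof; the only mildly delicate point is to verify that $S$ is strictly increasing on the interval bracketing $\alpha^\ast$ so the interpolation step is well-defined, which follows because at least one coordinate is in the linear regime whenever $S(\alpha)$ strictly lies between $da$ and $db$. I would conclude by noting that, as a special case, taking $a\to 0$, $b = 1$, $c = k$ yields an $\mathcal{O}(d \log d)$ algorithm for the proximity operator of the squared $k$-support norm, improving on the $\mathcal{O}(d(k+\log d))$ bound of \citep{Argyriou2012}.
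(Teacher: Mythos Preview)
Your proposal is correct and follows exactly the approach taken in the paper: the paper simply states that Theorem \ref{thm:prox-d-log-d} holds ``by the same reasoning as for Theorem \ref{thm:theta-norm-d-log-d},'' and you have spelled out that reasoning in detail, with the only change being the shifted breakpoints $(a+\lambda)/|w_i|$ and $(b+\lambda)/|w_i|$ in place of $a/|w_i|$ and $b/|w_i|$. Your write-up is in fact more complete than the paper's one-line justification, and the closing remark about the $k$-support special case anticipates Corollary \ref{cor:prox-of-ksup}.
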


Algorithm \ref{alg:prox_01} illustrates the computation of the proximity operator for the squared box $\Theta$-norm in $\mathcal{O}(d \log d)$ time.  This includes the $k$-support as a special case, which improves upon the complexity of the $\mathcal{O}(d(k + \log d))$ computation provided in \citep{Argyriou2012}.  
We summarize this in the following corollary.

\begin{corollary}\label{cor:prox-of-ksup}
The proximity operator of the square of the $k$-support norm at point $w$ with parameter $\frac{\lambda}{2}$ is given by
$\prox_{\frac{\lambda}{2}\Vert \cdot \Vert_{\Theta}^2}(w) = x$, 
where $x_i =  \frac{\theta_i w_i}{\theta_i+\lambda}$, and
\begin{align*}
\theta_i &= 
\begin{cases}
    		0, & \text{if} \, \alpha  < \frac{\lambda}{\vert w_i \vert}, \\
    		\alpha \vert w_i \vert - \lambda, & \text{if} \,  
    		\frac{\lambda}{\vert w_i \vert}\leq \alpha \leq \frac{1+\lambda}{\vert w_i \vert}\\
    		1, & \text{if} \, \alpha  > \frac{1+\lambda}{\vert w_i \vert} ,
  	\end{cases}
\end{align*}
where $\alpha$ is chosen such that $S(\alpha) = k$.
Furthermore, the proximity operator can be computed in $\mathcal{O}(d \log d)$ time. 
\end{corollary}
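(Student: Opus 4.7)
My plan is to obtain the corollary as a direct specialization of Theorem \ref{thm:prox-of-general-theta-norm} and Theorem \ref{thm:prox-d-log-d} to the limiting parameter choice identified in Theorem \ref{thm:k-support-as-theta}. Recall that by Theorem \ref{thm:k-support-as-theta}, the $k$-support norm coincides with the box $\Theta$-norm when $a\to 0$, $b=1$ and $c=k$, the only caveat being that the lower bound on $\theta_i$ becomes a strict inequality and the infimum in \eqref{eqn:theta-primal} need not be attained. The goal is therefore to verify that the proximity operator formula of Theorem \ref{thm:prox-of-general-theta-norm} passes to this limit without incident.

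First I would substitute $a=0$, $b=1$, $c=k$ directly into \eqref{eqn:cluster-theta} and observe that the three-branch expression reduces verbatim to the formula stated in the corollary. The key point requiring a sentence of justification is that even when $\theta_i=0$, the closed-form $x_i = \tfrac{\theta_i w_i}{\theta_i+\lambda}$ remains well-defined, since $\lambda>0$ guarantees $\theta_i+\lambda>0$; in this case $x_i=0$, which is the correct thresholded value. Thus, although the infimum over $\Theta$ in the norm definition is not attained at the boundary, the outer minimization that defines the proximity operator \emph{is} attained, because the proximal objective $\tfrac{1}{2}\|x-w\|^2 + \tfrac{\lambda}{2}\|x\|_{(k)}^2$ is coercive and strictly convex. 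Hence one can either redo the optimization argument of Theorem \ref{thm:prox-of-general-theta-norm} verbatim with $a=0$ (the Lagrangian manipulation and Lemma \ref{lem:mmp-box-solution}-style componentwise minimization go through unchanged since $\lambda>0$), or invoke a continuity/limit argument by letting $a\downarrow 0$ in the box $\Theta$-norm formula.

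Next I would verify that the constraint $S(\alpha)=k$ identifies $\alpha$ uniquely. As in the proof of Theorem \ref{thm:prox-d-log-d}, the function $S(\alpha)=\sum_{i=1}^d \theta_i(\alpha)$ is a non-decreasing piecewise linear function of $\alpha$ with at most $2d$ breakpoints, located at $\tfrac{\lambda}{|w_i|}$ and $\tfrac{1+\lambda}{|w_i|}$ for $i=1,\dots,d$. Since $S$ ranges from $0$ to $d$ and $k\le d$, a solution exists; uniqueness follows on the interval where $S$ is strictly increasing, and any ambiguity on a flat segment does not affect the resulting $\theta$ on that segment.

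For the complexity claim, I would copy the argument of Theorem \ref{thm:prox-d-log-d} essentially verbatim: sort the $2d$ breakpoints in $\mathcal{O}(d\log d)$ time, locate the two consecutive breakpoints $\alpha^i,\alpha^{i+1}$ that bracket the target value $k$ by binary search with $\mathcal{O}(\log d)$ evaluations of $S$ each costing $\mathcal{O}(d)$, and then linearly interpolate $\alpha^\ast$ between them in $\mathcal{O}(d)$ time. The total cost is $\mathcal{O}(d\log d)$, which already strictly improves the $\mathcal{O}(d(k+\log d))$ algorithm of \citep{Argyriou2012}. The main (minor) obstacle throughout is purely bookkeeping: making sure that the $a=0$ limit of Theorem \ref{thm:prox-of-general-theta-norm} is justified rather than merely asserted, and in particular that the correspondence between the unattained infimum defining $\|\cdot\|_{(k)}$ and the attained minimum in the proximal problem is handled cleanly.
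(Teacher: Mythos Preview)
Your proposal is correct and follows essentially the same approach as the paper: the paper's proof simply states that the expression follows from Theorem~\ref{thm:prox-of-general-theta-norm} by letting $a\to 0$, $b=1$, $c=k$ in \eqref{eqn:cluster-theta}, and that the complexity claim follows from Theorem~\ref{thm:prox-d-log-d}. Your additional care in justifying that the $a=0$ limit is legitimate (via $\lambda>0$ keeping $\theta_i+\lambda>0$ and coercivity of the proximal objective) goes beyond what the paper actually writes, but is entirely in the same spirit.
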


\begin{proof}
The expression follows directly from Theorem \ref{thm:prox-of-general-theta-norm} by letting $a\rightarrow 0$, and setting $b=1$ and $c=k$ in equation (\ref{eqn:cluster-theta}) and the complexity claim follows directly from Theorem \ref{thm:prox-d-log-d}.
\end{proof}

\begin{algorithm}[ht]
\caption{Computation of $x= \prox_{\frac{\lambda^2}{2}\|\cdot \|_{\Theta}^2}\left(w\right)$. \label{alg:prox_01}}
\begin{algorithmic} 
\REQUIRE parameters  $a$, $b$, $c$, $\lambda$.
\STATE \textbf{1.} Sort points $\left\{ \alpha^i \right\}_{i=1}^{d} = \left\{ \frac{a+\lambda}{ \vert w_j \vert}, \frac{b+\lambda}{ \vert w_j \vert} \right\}_{j=1}^d$  such that $\alpha^i \leq \alpha^{i+1}$.
\STATE \textbf{2.} Identify points $\alpha^i$ and $\alpha^{i+1}$ such that $S(\alpha^i) \leq c$ and $S(\alpha^{i+1})\geq c$ by binary search.
\STATE \textbf{3.} Find $\alpha^*$ between $\alpha^i$ and $\alpha^{i+1}$ such that $S(\alpha^*)=c$ by linear interpolation.
\STATE \textbf{4.} Compute $\theta_i(\alpha^*)$ for $i=1\ldots, d$.
\STATE \textbf{5.} Return $x_i =\frac{\theta_i w_i}{\theta_i+\lambda}$ for $i=1\ldots, d$.
\end{algorithmic}
\end{algorithm}


\section{Spectral Regularization}\label{sec:matrix-regularization}
In this section, we use the box $\Theta$-norm to define a class of orthogonally invariant matrix norms. In particular, we extend the $k$-support norm to this setting and note that it is closely related to the cluster norm of \citep{Jacob2009a}.

A norm $\Vert\cdot \Vert$ on $\mathbb{R}^{d \times m}$ is called orthogonally invariant if 
\begin{align}
\Vert W \Vert= \Vert U W V\trans \Vert, \label{eqn:unitarily-invariant-def} 
\end{align}
for any orthogonal matrices $U \in \mathbb{R}^{d \times d}$ and $V \in \mathbb{R}^{m \times m}$. 
A classical result of von Neumann \citep{VonNeumann1937} establishes that a norm is orthogonal invariant if and only if 
it is of the form $\Vert W \Vert = g(\sigma(W))$, where $\sigma(W)$ is the vector formed by the singular values of $W$, and $g$ is a {\em symmetric gauge function}.  This means that $g$ is a norm which is invariant under permutations and sign changes. 
That is, $g(w_1,...,w_d) = g(w_{\pi(1)},\ldots,w_{\pi(d)})$ for every permutation $\pi$, and $g(Jw) = g(w)$ for every diagonal matrix $J$ with entries $\pm 1$.  

Considering the box $\Theta$-norm objective in equation (\ref{eqn:theta-primal}), we note that it is sign invariant. 
Moreover, if the set $\Theta$ is permutation invariant (meaning that if $\theta \in \Theta$ then $\theta_\pi \in \Theta$ for every permutation $\pi$), then the $\Theta$-norm is permutation invariant.
This holds for the set \eqref{eqn:theta-abc-def} as the components $\theta_i$ each lie in the same interval $[a,b]$, and the sum is invariant to the order of the terms by Lemma \ref{lem:ordering}, we conclude that the corresponding box $\Theta$-norm is a symmetric gauge function. 
By applying this norm to the spectrum of a matrix we obtain an orthogonally invariant norm which we term a \emph{spectral $\Theta$-norm}. 
In particular choosing $a=0,b=1,c=k$, we obtain the spectral $k$-support norm, which we define (with some abuse of notation) as
$\Vert W \Vert_{(k)} = \Vert \sigma(W) \Vert_{(k)}$.


\subsection{Cluster Norm for Multi Task Learning}
We now briefly discuss multitask learning. 
The problem is given by 
\begin{align*}
\min_{W \in \mathbb{R}^{d \times m}} \ell(W) + \lambda \Omega(W)
\end{align*} 
for some loss function $\ell$ and regularizer $\Omega$. 
The columns of $W$ represent different regression vectors (tasks). 
A natural assumption that arises in applications is that the tasks are clustered and a matrix regularizer can be chosen to favour such structure. 
We refer to \citep{Evgeniou2005,Argyriou2008,Jacob2009a} for a discussion and motivating examples.

In particular, \citep{Jacob2009a} propose a composite regularizer which balances a tradeoff 
between the norm of the mean of the tasks, 
the variance between the clusters, and the variance within the clusters and define  
the cluster norm as 
\beq
\Vert W \Vert_{\rm cl} = \sqrt{\inf_{\Sigma \in \mathcal{S}} \tr (W \Sigma^{-1} W\trans)}
\label{eq:CN}
\eeq
where $\mathcal{S}$ is a set of symmetric positive definite matrices subject to a spectral constraint, 
\beq
\mathcal{S} = \{ \Sigma \succeq 0 : a I \preceq \Sigma \preceq b I,~ \tr \, \Sigma = c \}
\label{eq:CNset}
\eeq
and $c=(b-a) k + m a$. Here $k+1$ plays the role of the number of clusters and the parameters $a$ and $b$ control 
the variances within and between the clusters respectively \citep[see][for a discussion]{Jacob2009a}.

The cluster norm corresponds to the matrix norm where we apply the box $\Theta$-norm to the singular values of the matrix.  
\begin{proposition}\label{thm:cluster-quadratic-form}
The cluster norm \eqref{eq:CN} is an orthogonally invariant norm and can be written as
\begin{align}
\Vert W \Vert_{\rm cl} = \Vert \sigma(W) \Vert_{\Theta} = \sqrt{\inf_{\theta \in \Theta} \sum_{i=1}^d \frac{\sigma_i(W)^2}{\theta_i}}, \label{eqn:theta-form-of-cluster-norm}
\end{align}
where $\Theta =\left\{ \theta \in \mathbb{R}^d:  a \leq \theta \leq b , \sum_{i=1}^d \theta_i \leq c \right\}$.
\end{proposition}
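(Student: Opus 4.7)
The plan is to establish the two identities in \eqref{eqn:theta-form-of-cluster-norm} by first reducing the cluster norm to a problem about eigenvalues of $\Sigma$, and then using a rearrangement/trace inequality argument to align them optimally with the singular values of $W$. The second equality is essentially a restatement of the definition of the spectral $\Theta$-norm as the $\Theta$-norm of $\sigma(W)$, so the real content is showing that $\|W\|_{\rm cl}^2 = \inf_{\theta\in\Theta}\sum_i \sigma_i(W)^2/\theta_i$ for the box set of the statement.

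The first step I would carry out is orthogonal invariance. Using the cyclic identity
$$\tr\!\bigl((UWV\trans)\Sigma^{-1}(UWV\trans)\trans\bigr) = \tr\!\bigl(W (V\trans\Sigma V)^{-1}W\trans\bigr),$$
together with the observation that $\Sigma \mapsto V\trans\Sigma V$ preserves the spectrum of $\Sigma$ and hence bijects $\mathcal{S}$ onto itself (both $aI\preceq \Sigma\preceq bI$ and $\tr\Sigma = c$ are spectral conditions), one sees $\|UWV\trans\|_{\rm cl} = \|W\|_{\rm cl}$. By von Neumann's theorem recalled earlier in this section, $\|W\|_{\rm cl}$ is therefore a symmetric gauge function of the singular values of $W$, and it remains to identify that function.

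The second and main step is to eliminate the orthogonal factor in $\Sigma$. Write $\Sigma = Q\diag(\theta_1,\dots,\theta_m)Q\trans$ with $Q$ orthogonal and $\theta$ in the box set. By the cyclic property,
$$\tr(W\Sigma^{-1}W\trans) = \tr\!\bigl(\diag(1/\theta_i)\, Q\trans W\trans W\, Q\bigr) = \sum_{i,j} \frac{Q_{ji}^2 \,\sigma_j(W)^2}{\theta_i}.$$
The feasible set $\mathcal{S}$ decouples into the product of an orthogonal factor $Q$ and the spectrum $\theta\in\Theta$. For fixed $\theta$, minimizing over $Q$ is a classical trace-inequality problem for the two positive semidefinite matrices $\Sigma^{-1}$ and $W\trans W$: by the inequality $\tr(AB)\ge \sum_i \lambda_i^{\uparrow}(A)\lambda_i^{\downarrow}(B)$ (attained when $A$ and $B$ are simultaneously diagonalized with oppositely sorted eigenvalues), the minimum is $\sum_i \sigma_i(W)^2/\theta_i^{\downarrow}$, obtained by taking $Q$ to be a basis of right singular vectors of $W$ with eigenvalues of $\Sigma$ sorted in the opposite order to those of $W\trans W$. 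Since $\Theta$ is permutation invariant, the subsequent infimum over $\theta\in\Theta$ is insensitive to this ordering, and we conclude
$$\|W\|_{\rm cl}^2 \;=\; \inf_{\theta\in\Theta}\sum_i \frac{\sigma_i(W)^2}{\theta_i},$$
which is the desired formula.

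The main obstacle is the rearrangement step, i.e.\ carefully invoking the trace inequality to argue that simultaneous diagonalization of $\Sigma$ and $W\trans W$ is optimal and that the pairing must be anti-monotone; everything else is bookkeeping (for instance, that the inequality $\sum\theta_i\le c$ is tight at the optimum since increasing any $\theta_i$ decreases the objective, matching the equality constraint $\tr\Sigma=c$ in \eqref{eq:CNset}, and that any mismatch between $m$ and $\min(d,m)$ is absorbed by padding $\sigma(W)$ with zeros).
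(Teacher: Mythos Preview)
Your argument is correct and follows essentially the same route as the paper: both use the SVD of $W$, the cyclic property of the trace, and the fact that the constraint set $\mathcal{S}$ is preserved under orthogonal conjugation, to reduce the infimum over $\Sigma\in\mathcal{S}$ to an infimum over the spectrum $\theta\in\Theta$. The paper's proof is terser and simply observes that after the change of variable $E'=V\trans E^{-1}V$ the spectrum is unchanged, then declares the conclusion; you go one step further and make explicit the rearrangement/trace-inequality argument that pins down why the optimal $\Sigma$ is co-diagonal with $W\trans W$, which is the piece the paper leaves implicit.
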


\begin{proof}
Let $(U,\Sigma,V)$ be the singular value decomposition of $W$ and note that
\begin{align*}
\tr (W E^{-1} W\trans) & = \tr (U \Sigma V\trans E^{-1} V \Sigma\trans U\trans) \\
& =\tr (\Sigma V\trans E^{-1} V \Sigma\trans),
\end{align*}
where the second equality follows by cyclic property of the trace operator. 
Now let $E' = V\trans E^{-1} V$ and observe that 
$\sigma(E) = \sigma(E')$. We conclude that the cluster norm is orthogonally invariant and can be expressed in the form \eqref{eqn:theta-form-of-cluster-norm}.
\end{proof}

The computational considerations in Sections \ref{sec:computation-of-theta-norm} and \ref{sec:computation-of-prox} can be naturally extended to the matrix setting by using von Neumann's trace inequality \citep[see e.g.][ex. 3.3.10]{Horn1991}. 
Here we only comment on the computation of the proximity operator which is important for our numerical experiments below. 
The proximity operator of an orthogonally invariant norm $\Vert \cdot \Vert = g( \sigma(\cdot))$ is given by
\begin{align*}
\prox_{\Vert \cdot \Vert} (W)= U \text{diag}(\prox_{\|\cdot\|}(\sigma(W))) V\trans,
\end{align*}
where $U$ and $V$ are the matrices formed by the left and right singular vectors of $W$ \citep[see, e.g.][Prop 3.1]{Argyriou2011}. 
Using this result we can apply any spectral $\Theta$-norm, such as the spectral $k$-support norm, to matrix learning problems as we describe in Section \ref{sec:numerics} below.


\subsection{Centered Cluster Norm}

In their work, \citep{Jacob2009a} consider the following regularization problem:
\begin{align}
\min_{W \in \mathbb{R}^{d \times m}}  \ell_{\rm c}(W) + \lambda \Vert W \Pi \Vert_{\rm cl}^2 \label{eqn:cluster-norm-problem}.
\end{align}
where $\Pi = I - \frac{11\trans}{m~}$, is the centering operator, so that $\Pi W = [w_1- {\bar w},\dots,w_m - {\bar w}]$ with ${\bar w} = \frac{1}{m} \sum_{i=1}^m w_i$ and $\ell_{\rm c}$ is a modified loss term which incorporates the square norm regularization of the mean of the tasks. We end this section by discussing how to solve the optimization problem \eqref{eqn:cluster-norm-problem}. The following lemma is key.

\begin{lemma}
\label{lem:infimum-of-cluster}
For all $W =[w_1,\dots,w_m] \in {\mathbb R}^{d \times m}$, it holds 
\begin{align*}
\|W\Pi\|_{\rm cl} = \min_{z \in {\mathbb R}^d} \|[w_1-z,\cdots,w_m-z]\|_{\rm cl}.
\end{align*}
\end{lemma}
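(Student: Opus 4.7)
The plan is to show the two inequalities that together give the equality, with the right-hand side minimized at $z = \bar{w} := \frac{1}{m}\sum_{j=1}^m w_j$. For the easy direction, the identity $\Pi = I - \mathbf{1}\mathbf{1}\trans/m$ together with $W\mathbf{1} = m\bar{w}$ gives
$$
W - \bar{w}\,\mathbf{1}\trans \;=\; W - \tfrac{1}{m} W\mathbf{1}\mathbf{1}\trans \;=\; W\Pi,
$$
so substituting $z = \bar{w}$ into the right-hand side yields exactly $\|W\Pi\|_{\rm cl}$. Hence $\min_z \|[w_1-z,\dots,w_m-z]\|_{\rm cl} \leq \|W\Pi\|_{\rm cl}$.

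For the reverse inequality, the key observation is the identity $\mathbf{1}\trans\Pi = 0$, which implies that for every $z \in \mathbb{R}^d$,
$$
(W - z\mathbf{1}\trans)\Pi \;=\; W\Pi - z(\mathbf{1}\trans\Pi) \;=\; W\Pi.
$$
I would then invoke the standard submultiplicativity of orthogonally invariant norms: for any such norm $\|\cdot\|$ and matrices $A,B$ of compatible dimensions, $\|AB\| \leq \|A\|\cdot\|B\|_{\rm op}$, where $\|B\|_{\rm op}$ is the spectral norm. This follows from the singular-value inequality $\sigma_i(AB) \leq \sigma_i(A)\,\sigma_1(B)$ combined with the monotonicity of the symmetric gauge function associated with the norm. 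By Proposition \ref{thm:cluster-quadratic-form}, the cluster norm is orthogonally invariant; and since $\Pi$ is an orthogonal projection, $\|\Pi\|_{\rm op} = 1$. Combining these facts gives, for every $z \in \mathbb{R}^d$,
$$
\|W\Pi\|_{\rm cl} \;=\; \|(W - z\mathbf{1}\trans)\Pi\|_{\rm cl} \;\leq\; \|W - z\mathbf{1}\trans\|_{\rm cl}\cdot\|\Pi\|_{\rm op} \;=\; \|W - z\mathbf{1}\trans\|_{\rm cl}.
$$
Combined with the first bound, this proves the claim, and the minimum is attained at $z = \bar{w}$.

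No serious obstacle is anticipated; the only non-elementary ingredient is the submultiplicativity $\|AB\| \leq \|A\|\cdot\|B\|_{\rm op}$ for orthogonally invariant norms, which is classical (see, e.g., Bhatia, \emph{Matrix Analysis}, Ch.~IV). A more hands-on alternative would be to swap $\min_z$ with $\inf_{\Sigma \in \mathcal{S}}$ in the cluster-norm definition \eqref{eq:CN}, minimize explicitly over $z$ for each fixed $\Sigma$, and use the invariance of $\mathcal{S}$ under the reflection $R = I - 2\mathbf{1}\mathbf{1}\trans/m$ to restrict to covariance matrices having $\mathbf{1}$ as an eigenvector (for which the inner $z$-minimizer equals $\bar w$); the orthogonal-invariance route above is however much shorter.
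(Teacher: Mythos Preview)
Your proof is correct, and it takes a genuinely different route from the paper's. The paper does essentially what you sketch at the end as the ``hands-on alternative'': it uses orthogonal invariance to write $\|W\Pi\|_{\rm cl}^2 = \inf_{\Sigma}\tr\big((W\Pi)\trans \Sigma^{-1}(W\Pi)\big) = \inf_{\Sigma}\sum_i (w_i-\bar w)\trans \Sigma^{-1}(w_i-\bar w)$, observes that for each fixed $\Sigma$ the quadratic $z\mapsto \sum_i (w_i-z)\trans \Sigma^{-1}(w_i-z)$ is minimized at $z=\bar w$, and then swaps the two infima. (No reflection-invariance argument for $\mathcal S$ is needed; the inner $z$-minimizer is $\bar w$ for \emph{every} positive definite $\Sigma$, so that part of your sketch is an unnecessary detour.)

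Your main argument is cleaner in one respect: the inequality $\|A B\|\le \|A\|\,\|B\|_{\rm op}$ for orthogonally invariant norms, together with $\|\Pi\|_{\rm op}=1$ and $(W-z\mathbf 1\trans)\Pi=W\Pi$, gives the reverse bound for \emph{any} orthogonally invariant norm, not just the cluster norm. The paper's approach, by contrast, exploits the specific infimum-over-quadratics structure of $\|\cdot\|_{\rm cl}$; what it buys is that it is entirely self-contained within the machinery already set up in the paper and does not appeal to an external matrix-analysis fact.
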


\begin{proof} Since $\|\cdot\|_{\rm cl}$ is orthogonally invariant
\begin{align*}
\Vert W \Pi \Vert_{\rm cl}^2
&= \Vert (W \Pi)\trans \Vert_{\rm cl}^2 \\
&= \inf_{\Sigma \in \mathcal{S}_c} \textrm{tr} ( ( W \Pi)\trans \Sigma^{-1} (W \Pi )) \\
&= \inf_{\Sigma \in \mathcal{S}_c} \sum_{i=1}^m (w_i- \bar{w})\trans \Sigma^{-1} (w_i - \bar{w}) \\
&= \inf_{\Sigma \in \mathcal{S}_c} \min_{z \in \mathbb{R}^d} \sum_{i=1}^m (w_i- z)\trans \Sigma^{-1} (w_i - z) \\
&=\min_{z \in \mathbb{R}^d} \|[w_1- z,\dots,w_m-z]\|_{\rm cl},
\end{align*}
where we have used the fact that $W\Pi$ is the centred weight matrix $[w_1-\bar{w}, \ldots, w_m-\bar{w}]$, with $\bar{w} = \frac{1}{m} \sum_{i=1}^m w_i$, and the fact that for $U \in \mathbb{R}^{p \times m}$, $S  \in \mathbb{R}^{p \times q} $, $V  \in \mathbb{R}^{q \times m}$, where $U, V$ have columns $u_i$, $v_i$ respectively, we have 
\begin{align*}
\textrm{tr}(U\trans S V) = \sum_{i=1}^m u_i\trans S v_i .
\end{align*}
Finally, note that the quadratic $\sum_{i=1}^m (w_i- z)\trans \Sigma^{-1} (w_i - z)$ is minimized at $z=\bar{w}$.
\end{proof}

Using this lemma, and letting $\ww_i = w_i -z$, and $\WW = [w_1-z, \ldots, w_m-z]$, we rewrite problem \eqref{eqn:cluster-norm-problem} as
the equivalent problem in $m(d+1)$ variables
\begin{align}
\min_{(\WW,z) \in \mathbb{R}^{d \times m} \times  \mathbb{R}^d}  
\ell_c([\ww_1+z, \ldots, \ww_m+z]) + \lambda \Vert \WW \Vert_{\rm cl}^2.\label{eqn:cluster-norm-optimization}
\end{align}
This problem is of the form $\ell(V,z) + f(V,z)$, where $f(V,z) = \lambda \|V\|_{\rm cl}$. 
Using this formulation, we can directly apply the proximal gradient method using the proximity operator computation for the cluster norm, since $\prox_f(V_0,z_0) = (\prox_{\lambda \|\cdot\|_{\rm cl}}(V_0),z_0)$. 
Finally we point out that this method can be used to perform optimization with the centered trace norm and the centered spectral $k$-support norm since both can be written as a cluster norm.


\section{Numerical Experiments}\label{sec:numerics}
In this section, we report on the statistical performance of the spectral $\Theta$-norms in regularization problems, and we briefly comment on the numerical performance of Algorithm \ref{alg:prox_01} for the $k$-support norm.
In \citep{Argyriou2012}, the authors demonstrated the good estimation properties of the vector $k$-support norm compared to the Lasso and the elastic net.   
Here, we investigate the spectral $\Theta$-norms including the spectral $k$-support norm.
For both matrix completion and multitask learning we consider the spectral $k$-support (\emph{ks}) and the spectral $\Theta$-norm (\emph{box}).  
In the latter case we also consider the centered cluster norm (\emph{c-cn}) and the centered spectral $k$-support norm (\emph{c-ks}).
For both set of experiments we included the trace norm \citep[see, e.g.][]{Cai2008} and the spectral elastic net.

All optimizations used an accelerated proximal gradient method (FISTA), \citep[see e.g.][]{Beck2009, Combettes2010, Nesterov2007}.  
\footnote{Code used in the experiments is available at 
http://www0.cs.ucl.ac.uk/staff/M.Pontil/software.html.}
We report on test errors, number of iterations to convergence ($N$), final matrix rank ($r$) and parameter values for $k$ and $a$, all of which are chosen by validation along with the regularization parameter.

\subsection{Proximity Operator} 
We first verified the improved performance of the proximity operator computation. 
Table \ref{table:prox_comparison} illustrates the time taken by the $\mathcal{O}(d (k+\log d))$ method proposed in \citep{Argyriou2012} and by our $\mathcal{O}(d \log d)$ method on a synthetic noisy dataset, fixing $k=d/100$, and confirms the theoretical improvement.

\begin{table}
\caption{Comparison of proximity operator algorithms for $k$-support norm (in s), $k = d/100$. Algorithm 1 is the method in \citep{Argyriou2012}, Algorithm 2 is our method.}
\label{table:prox_comparison}
\vskip 0.15in
\begin{center}
\begin{small}
\begin{tabular}{|l|c|c|c|c|c|}
\hline
$k$ &   10 &  20 &  40 & 80 &  160 \\ \cline{1-6}
\hline
\hline
Alg. 1 & 0.0101  &  0.0361 &   0.1339 &   0.5176 &   2.0298 \\ \cline{1-6}
Alg. 2 & 0.0012  &  0.0017 &   0.0028 &   0.0044 &   0.0096 \\ \cline{1-6}
\hline
\end{tabular}
\end{small}
\end{center}
\vskip -0.1in
\end{table}

\subsection{Simulated Data}
{\bf Matrix completion.} We applied the regularizer to matrix completion on noisy observations of low rank matrices. 
Each $m \times m$ matrix is generated as $W=U V^T+\mathcal{E}$, where $U,V \in \mathbb{R}^{m\times r}$, $r \ll m$, and the entries of $U$, $V$ and $\mathcal{E}$ are i.i.d. standard Gaussian.

We sampled uniformly a percentage $\rho$ of the entries for training, of which 10\% were used in validation. 
The error was measured as $\frac{\|\text{true} - \text{predicted}\|^2}{\|\text{true}\|^2}$ \citep{Mazumder2010} and averaged over 100 trials.
The convergence criterion was the percentage change in objective, with a tolerance of $10^{-5}$.
The results are summarized in Table \ref{table:matrix_completion_synthetic}.

In all training regimes, the spectral $\Theta$-norm generated the lowest test errors, and converged within the fewest iterations.  
While it generated a matrix with the highest rank, the spectrum shows a rapid decrease after the first few singular values, which suggests a threshholding approach might be beneficial.
In the absence of noise, the spectral $\Theta$-norm performed worse than the trace and spectral $k$-support norms, which suggests that the lower bound on the non zero singular values due to $a$ counteracted the noise in this instance.

\begin{table}
\caption{Matrix completion on simulated data sets. }
\label{table:matrix_completion_synthetic}
\vskip 0.15in
\begin{center}
\begin{small}
\begin{tabular}{|l|l|c|c|c|c|c|}
\hline
   dataset & norm  & test error              & $N$       & $r$          & $k$ & $a$  \\
\hline
\hline
rank 5       & tr  & 0.8184 (0.03)           & 77          & 20           & -   &-  \\ \cline{2-7} 
$\rho$=10\%  & en  & 0.8164 (0.03)           & 65          & 20           &  -  & - \\ \cline{2-7} 
             & ks  & 0.8036 (0.03)           & 39          & 16           & 3.6 & -\\ \cline{2-7}
             & box & \textbf{0.7805 (0.03)}  & 33          & 87           & 2.9 & 0.017   \\ 
\hline
rank 5       & tr    & 0.5764 (0.04)           & 60          & 22   &-        & -     \\ \cline{2-7} 
$\rho$=15\%  & en  & 0.5744 (0.04)           & 56          & 21     &-      & -     \\ \cline{2-7} 
             & ks   & 0.5659 (0.03)           & 42          & 18     & 3.3    &- \\ \cline{2-7}
             & box      & \textbf{0.5525 (0.04) } & 30          & 100   & 1.3 & 0.009  \\ 
\hline
rank 5       & tr  & 0.4085 (0.03)           & 47          & 23           &  -  & - \\ \cline{2-7} 
$\rho$=20\%  & en  & 0.4081 (0.03)           & 45          & 23           &  -  & - \\ \cline{2-7} 
             & ks  & 0.4031 (0.03)           & 33          & 21           & 3.1 & - \\ \cline{2-7}
             & box & \textbf{0.3898 (0.03) } & 25          & 100          & 1.3 & 0.009     \\ 
\hline
\hline
rank 10      & tr  & 0.6356 (0.03)           & 49          & 27           &-  & -   \\ \cline{2-7} 
$\rho$=20\%  & en  & 0.6359 (0.03)           & 48          & 27           & - & -   \\ \cline{2-7} 
             & ks  & 0.6284 (0.03)           & 34          & 24           & 4.4 &-  \\ \cline{2-7}
             & box & \textbf{0.6243 (0.03)}  & 25          & 89           & 1.8  & 0.009      \\ 
\hline
rank 10      & tr  & 0.3642 (0.02)           & 68          & 36           &  -  & - \\ \cline{2-7} 
$\rho$=30\%  & en  & 0.3638 (0.02)           & 61          & 36           &  -  & - \\ \cline{2-7} 
             & ks  & 0.3579 (0.02)           & 42          & 33           & 5.0 &- \\ \cline{2-7}
             & box & \textbf{0.3486 (0.02)}  & 30          & 100           & 2.5 & 0.009      \\ 
\hline
\end{tabular}
\end{small}
\end{center}
\vskip -0.1in
\end{table}

{\bf Clustered Learning.} As a proof of concept we tested the centered cluster norm and the centered $k$-support norm on an synthetic dataset.
We generated a $100 \times 100$, rank 5, block diagonal matrix, where the entries of each $20\times 20$ block were equal to a given constant (between 1 and 10) plus noise.  
Table \ref{table:simulated-block-diagonal} illustrates the results for different training set sizes averaged over 100 runs.  
The cluster norm performed the best, however it generated full rank matrices (with the characteristic rapid spectral decay).
Furthermore, the centered spectral $k$-support norm was a close second, with test errors converging towards those of the cluster norm as the training set increased in size, while maintaining a low rank due to the sparsity-inducing properties of the $k$-support norm.
Figure \ref{fig:block-matrix} illustrates a sample matrix along with the final matrix determined by the cluster and trace norms.

\begin{figure}
\vskip 0.2in
\begin{center}
\centerline{\includegraphics[width=0.45\columnwidth]{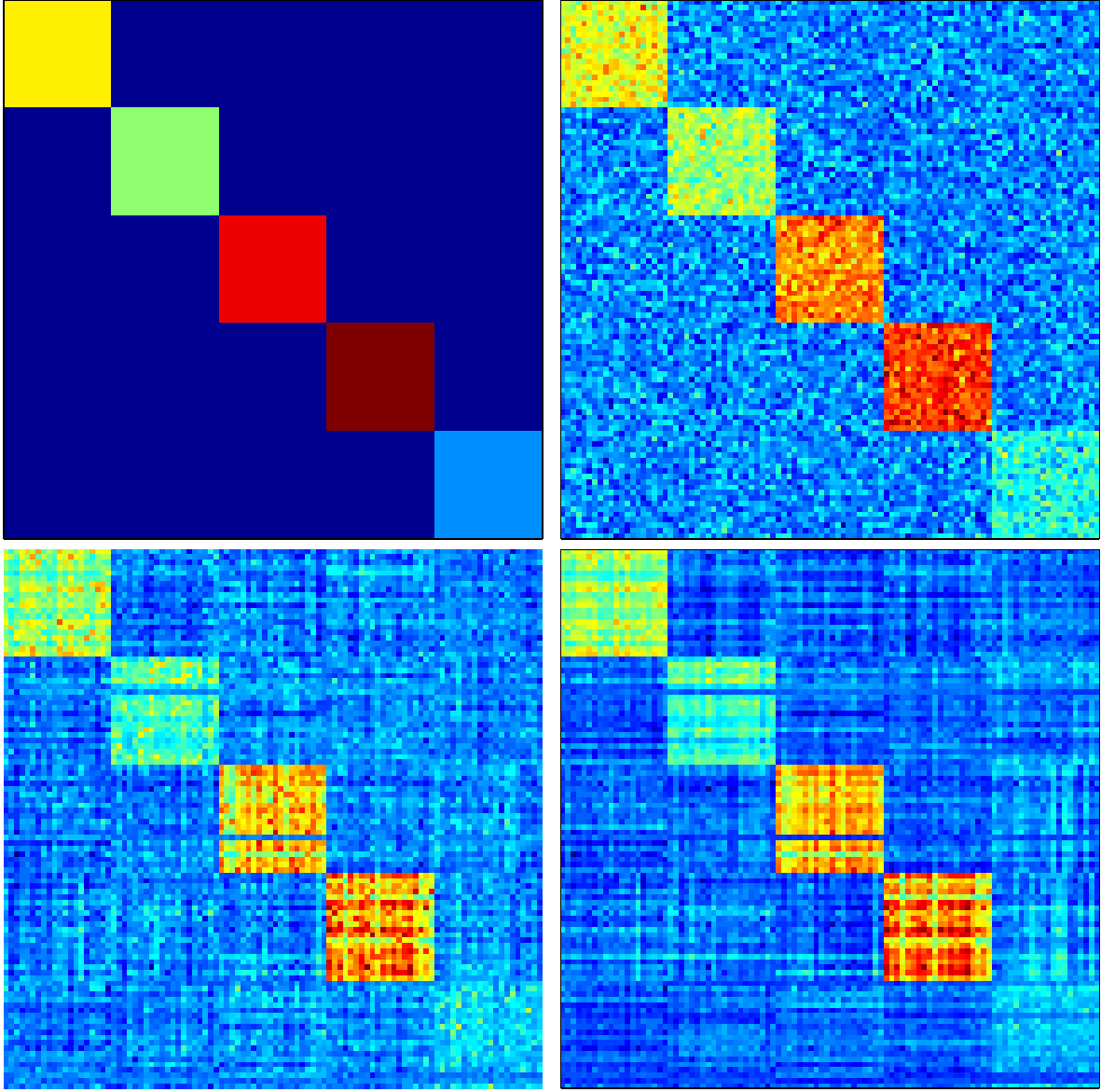}}
\caption{Block clustered matrix and recovered solution. Clockwise from top left: true, noisy, centered cluster norm, trace norm. }
\label{fig:block-matrix}
\end{center}
\vskip -0.2in
\end{figure}

\begin{table}
\caption{Clustered block diagonal matrix.}
\label{table:simulated-block-diagonal}
\vskip 0.15in
\begin{center}
\begin{small}
\begin{tabular}{|l|l|c|c|c|c|}
\hline
dataset & norm & test error    & $r$ & $k$ & $a$          \\ 
\hline
\hline

$\rho$=10\%    & tr         & 0.6909 (0.07) & 23 &        -  & -           \\ \cline{2-6} 
               & en         & 0.6876 (0.08)  & 22&         -  & -          \\ \cline{2-6} 
              & ks         & 0.6740 (0.08) & 21 & 3.10        & -         \\ \cline{2-6} 
                & box       & 0.6502 (0.07) & 100 & 2.86 & 0.0052   \\ \cline{2-6} 
                & c-ks       & \textbf{0.6211 (0.05)} & 16 & 2.21        & -         \\ \cline{2-6} 
                & c-cn        & \textbf{0.6085 (0.05)} & 100& 2.28 & 0.002    \\                 
\hline
\hline
$\rho$=15\%    & tr         & 0.3845 (0.05) & 22 &         -   &-          \\ \cline{2-6} 
               & en         & 0.3831 (0.05) & 22&          -  &-           \\ \cline{2-6} 
               & ks         & 0.3820 (0.05)  & 22& 2.70       &-          \\ \cline{2-6} 
               & abc        & 0.3798 (0.05) & 100&  2.48 & 0.004     \\ \cline{2-6} 
                & c-ks       & \textbf{0.3495 (0.04)} & 19 & 2.01        & -         \\ \cline{2-6} 
                & c-cn        & \textbf{0.3446 (0.03)} & 100 & 2.10 & 0.001  \\ 
\hline
\hline
$\rho$=20\%   & tr          & 0.2596 (0.03)& 25  &       -  &-             \\ \cline{2-6} 
              & en          & 0.2592 (0.03)  & 25 &        -  &-           \\ \cline{2-6} 
             & ks          & 0.2576 (0.03) & 23 & 2.80        &-         \\ \cline{2-6} 
              & box         & 0.2516 (0.03)  & 100 & 2.53 & 0.006   \\ \cline{2-6} 
              & c-ks         & \textbf{0.2228 (0.02)} & 22 & 2.26        & -         \\ \cline{2-6} 
              & c-cn          & \textbf{0.2221 (0.02)} & 100 & 2.42 & 0.002   \\ 
\hline
\end{tabular}
\end{small}
\end{center}
\vskip -0.1in
\end{table}

\subsection{Matrix Completion}
Next we considered different matrix completion data sets, where a percentage of the (user,rating) entries of a matrix are observed, and the task is to predict the unobserved ratings, with the assumption that the true underlying matrix is low rank.
We considered the following datasets.

\emph{MovieLens 100k.}\footnote{http://grouplens.org/datasets/movielens/} This dataset consists of 943 users and 1,682 movies, with a total of 100,000 ratings from $1$ to $5$.  

\emph{Jester-1\footnote{http://goldberg.berkeley.edu/jester-data/}.} This dataset consists of 24,983 users and 100 jokes, all users have rated a minimum of 36 jokes and ratings are real values from -10 to 10.   

\emph{Jester-2}. This dataset consists of 23,500 users and 100 jokes, all users have rated a minimum of 36 jokes and ratings are real values from -10 to 10.

\emph{Jester-3.} This dataset consists of 24,938 users and 100 jokes, all users have rated between 15 and 35 jokes and ratings are real values from -10 to 10. 

For Movielens we uniformly sampled $\rho=50\%$ of the available entries for each user for training, as in \citep{Toh2011}, for Jester-1 and Jester-2, 20 entries, and for Jester-3, 8 entries, and we used 10\% of the these for validation.
The error measurement was Normalized Mean Absolute Error as in \citep{Toh2011}, 
\begin{align*}
\text{NMAE} &= \frac{\Vert \text{true}-\text{predicted}\Vert^2}{\frac{\#\text{observations}}{r_{\max} -r_{\min}}}.
\end{align*} 
and we used percentage change in objective, with a tolerance of $10^{-3}$, averaged over 50 runs.  
The results are outlined in Table \ref{table:matrix_completion_real}.  
Note that on all three datasets, the spectral $k$-support performed the best (standard deviations were of the order of $10^{-5}$), and we note that our results for trace norm regularization on MovieLens agreed with \citep{Jaggi2010}.

\begin{table}
\caption{Matrix completion on MovieLens and Jester data sets. 
}
\label{table:matrix_completion_real}
\vskip 0.15in
\begin{center}
\begin{small}
\begin{tabular}{|l|l|c|c|c|c|c|}
\hline
 dataset   & norm & test error      & $N$       & $r$          & $k$ & $a$ \\
\hline
\hline
Movie-      & tr   & 0.2034          & 57          & 87           &   -&-        \\ \cline{2-7} 
Lens        & en & 0.2034          & 67          & 87           &    -&-       \\ \cline{2-7} 
$\rho=50\%$ & ks  & \textbf{0.2031} & 67          & 102          & 1.00   &-  \\ \cline{2-7} 
            & box     & 0.2035           & 61          & 943         & 1.00 & $10^{-5}$ \\               
\hline
\hline              
Jester1     & tr   & 0.1787          & 14          & 98           &    -&-      \\ \cline{2-7} 
20 per      & en & 0.1787          & 14          & 98           &   -&-       \\ \cline{2-7} 
  line      & ks  & \textbf{0.1764} & 14          & 84           & 5.00 &-    \\ \cline{2-7}
       	    & box     & 0.1766       & 16           & 100      & 4.00 & $10^{-6}$  \\        	
\hline

\hline              
Jester2     & tr   &  0.1788           & 14          & 97           &    -&-      \\ \cline{2-7} 
20 per      & en &  0.1788         & 14          & 97           &   -&-       \\ \cline{2-7} 
  line      & ks  & \textbf{0.1767} & 13          & 75           & 5.00 &-    \\ \cline{2-7}
       	    & box     & 0.1767       & 14           & 100      & 4.00 & $10^{-6}$  \\        	
\hline
\hline            
Jester3     & tr   & 0.1988          & 16          & 49           &   -&-       \\ \cline{2-7} 
8  per      & en & 0.1988          & 16          & 49           &   -&-       \\ \cline{2-7} 
  line      & ks  & \textbf{0.1970} & 17          & 46           & 3.70 &-   \\ \cline{2-7}
            & box     & 0.1973       & 23           &  100          & 5.91 & 0.001 \\ 
\hline
\end{tabular}
\end{small}
\end{center}
\vskip -0.1in
\end{table}

\subsection{Clustered Multitask Learning}
In our final set of experiments we considered the \emph{Lenk personal computer} dataset \citep{Lenk1996}, see also \citep{Argyriou2008}.  
This consists of 180 ratings on a scale from 0 to 10 of 20 profiles of computers characterized by 13 features ($d=14$ with bias term).  
The clustering is suggested by the assumption that users are motivated by similar groups of features. 
We used the RMSE of true vs. predicted ratings, normalised over the tasks, with a tolerance of $10^{-5}$, averaged over 100 runs.

The results are outlined in Table \ref{table:lenk-mtl}.  
The cluster norm and centered spectral $k$-support outperformed the other penalties considerably in all regimes, with the \emph{c-cn} performing slightly better out of the two. 
The improvement due to centering compares favourably to similar results for the trace norm in \citep{Argyriou2008}.



\begin{table}
\caption{Multitask learning clustering on Lenk dataset.
}
\label{table:lenk-mtl}
\vskip 0.15in
\begin{center}
\begin{small}
\begin{tabular}{|l|l|c|c|c|c|}
\hline
	dataset  &  norm  & test error           & $r$      & $k$ & $a$      \\ 
\hline
\hline
5 per        & tr      & 2.2238 (0.17)          & 9      &      -&-            \\ \cline{2-6} 
task         & rn      & 2.1196 (0.06)          & 11  &      -&-            \\ \cline{2-6} 
            & ks      & 2.1679 (0.07)          & 10    & 1.01  &-             \\ \cline{2-6} 
             & box     & 2.1637 (0.07)           & 14 & 1.00 & 0.0001         \\ \cline{2-6} 
             & c-ks     & \textbf{1.9309 (0.04)}  & 11 & 1.01  &-             \\ \cline{2-6} 
             & c-cn      & \textbf{1.9219 (0.05)}  & 14 & 1.82 & 0.0006    	 \\ 
\hline
\hline
8 per       & tr       & 1.9110 (0.05)          & 14   &         -&-        \\ \cline{2-6} 
task         & en      & 1.8983 (0.05)          & 14 &         -&-           \\ \cline{2-6} 
           & ks      & 1.9158 (0.05)          & 13   & 1.04     &-         \\ \cline{2-6} 
		     & box     & 1.9061 (0.05)		   & 14  & 1.02 & 0.0003   	 \\ \cline{2-6} 
             & c-ks     & \textbf{1.7915 (0.03)} & 13    & 2.06  &-             \\ \cline{2-6} 
             & c-cn      & \textbf{1.7912 (0.03)} & 14   & 2.06 & 0.00006       \\ 
\hline
\hline
12 per       & tr      & 1.8702 (0.05)           & 8   &       -&-            \\ \cline{2-6} 
 task       & en       & 1.8642 (0.05)           & 8   &       -&-            \\ \cline{2-6} 
           & ks      & 1.7620 (0.07)            & 13  & 1.02   &-          \\ \cline{2-6} 
             & box     & 1.7194 (0.02)           & 14 & 1.01 & 0.0003        \\ \cline{2-6} 
             & c-ks      & \textbf{1.6813 (0.02)} & 14    & 2.61  &-             \\ \cline{2-6} 
             & c-cn      & \textbf{1.6744 (0.02)}  & 14 & 1.00 & 0.0009        \\ 
\hline
\end{tabular}
\end{small}
\end{center}
\vskip -0.1in
\end{table}


\section{Conclusion}\label{sec:conclusion}
We considered a family of regularizers and studied a special case of the parameter set.
We showed that the $k$-support norm belongs to this family, and we showed that the matrix extension of the $k$-support is a special case of the cluster norm for multitask learning.
We derived an efficient computation of the norm and an improved computation of the proximity operator of the square norm.
Furthermore we presented a method to solve matrix learning problems with the centered cluster norm using proximal gradient methods. 

Our empirical results confirm the improved computational performance of our proposed proximity operator algorithm, which enabled us to efficiently solve matrix problems using the spectral penalties outlined herein.  
Our experiments for matrix completion indicate that the spectral $\Theta$-norm generally outperforms the trace norm and the elastic net.
For multitask learning, the centered cluster norm and the centered spectral $k$-support norm considerably outperformed the trace norm and elastic net. 
Given the simpler form of the spectral $k$-support, the results suggest that on datasets exhibiting clustering structure this norm would be a viable alternative to the trace norm. 

In future work we would like to derive oracle inequalities and Rademacher complexities for the spectral penalty. 
It is also of interest to study the broader family of penalties, which admits natural extensions to a general family of operator norms, as well as their relationship to other known families of regularizers.

\subsubsection*{Acknowledgements}
We would like to thank Andreas Argyriou, Andreas Maurer and Charles Micchelli for useful discussions.  Part of this work was supported by EPSRC Grant EP/H027203/1 and Royal Society International Joint Project 2012/R2.

\newpage
\appendix

\section{Appendix}
In this appendix we provide proofs of some of the results stated in the main body of the paper, and collect some auxiliary results.

\begin{proof}[Proof of Theorem \ref{thm:theta-is-norm}]
Consider the expression for the dual norm. 
For fixed $\theta$, the objective $f$ in the supremum 
is positive, zero only for $u=0$, absolutely homogeneous of order one and convex.  The triangle inequality holds since for such $f$ by homogeneity and convexity we have
\begin{align*}
f(u+v) &= 2 f\left( \frac{1}{2}u + \frac{1}{2}v \right) \leq 2 \frac{1}{2}f(u) + 2 \frac{1}{2}f(v) \\
&= f(u) + f(v).
\end{align*}
Since the supremum of convex functions is convex, the dual $\Theta$ expression is indeed a norm.
With respect to the primal expression, we recall the definition of the Fenchel conjugate of $f:\mathbb{R}^d\rightarrow\mathbb{R}$:
\begin{align*}
f^*(u) = \sup_{w \in \mathbb{R^d}} \langle u,w \rangle-f(w).
\end{align*} 
It is a standard result from convex analysis that for any norm $\Vert \cdot \Vert$, the Fenchel conjugate $f^*$ of $f(w) = \frac{1}{2}\Vert w \Vert^2$ satisfies $f^*(u) = \frac{1}{2}\Vert u \Vert_*^2$, 
where $\Vert \cdot \Vert_*$ is the corresponding dual norm (see, e.g. \citep[p. 93]{Boyd2004}). 
Furthermore, for any norm, the biconjugate $f^{**}$ is equal to $f$ \citep[Th. 4.2.1]{Borwein2000}. 
Applying this to the dual norm we have
\begin{align*}
\frac{1}{2}\Vert w \Vert_{\Theta}^2 &= f(w) = \sup_{u} \left\{ \langle w,u \rangle - f^*(u)\right\}\\
&= \sup_{u} \left\{ \sum_{i=1}^d w_i u_i - \frac{1}{2} \sup_{\theta} \sum_{i=1}^d \theta_i u_i^2 \right\}\\
&= \sup_{u} \inf_{\theta} \left\{ \sum_{i=1}^d \left( w_i u_i - \frac{1}{2} \theta_i u_i^2\right)  \right\}.	
\end{align*}
This is a minimax problem in the sense of von Neumann \citep[see e.g.][]{Bertsekas2003}, and we can exchange the order of the $\inf$ and the $sup$, and solve the latter (which is in fact a minimum) componentwise.  
The gradient with respect to $u_i$ is zero for $u_i =\frac{w_i}{\theta_i}$, and substituting this into the objective we get
\begin{align*}
\frac{1}{2}\Vert w \Vert_{\Theta}^2 &=  \frac{1}{2} \inf_{\theta} \sum_{i=1}^d \frac{w_i^2}{\theta_i},
\end{align*}
and we recover the infimum expression.  It follows that eq. (\ref{eqn:theta-primal}) defines a norm, and the two norms are duals of each other as required.
\end{proof}

In order to prove Proposition \ref{prop:infconv-abc}, we will prove the result under more general assumptions and derive Proposition \ref{prop:infconv-abc} as a special case. Our proof technique is essentially taken from \citep[Theorem 7]{Maurer2012}.

We first show the following.
\begin{lemma}\label{lem:max-of-norms}
Let $T_n \in S_d$, the set of real valued symmetric $d \times d$ matrices, for $n=1, \ldots, N$, and assume for every $w \in \R^d$, $w \neq 0$, there exists $T_n$ such that $T_n w \neq 0$. Then 
\begin{align}
\max_{n=1}^N \Vert T_n w \Vert_2  \label{eqn:max-of-norms-is-norm}
\end{align}
is a norm and the dual norm is given by 
\begin{align*}
\inf_{v } \left\{ \sum_{n=1}^N \Vert T_n^+ v_n \Vert_2 : \sum_{n=1}^N v_n = w , v_n \in \textrm{Range}(T_n) \right\}
\end{align*}
where $T_n^+$ is the pseudo-inverse of $T_n$.
\end{lemma}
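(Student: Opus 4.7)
\bigskip

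\noindent\textbf{Proof plan.}
The plan is to first verify directly that $f(w) := \max_{n=1}^N \|T_n w\|_2$ is a norm, and then to identify the proposed infimum as the dual norm via a Fenchel conjugacy computation.

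\medskip

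First, I would check the three defining properties of a norm for $f$. Nonnegativity and absolute homogeneity are inherited termwise from $\|\cdot\|_2$ and the linearity of each $T_n$. The triangle inequality follows by writing, for each $n$, $\|T_n(w+w')\|_2\le \|T_n w\|_2+\|T_n w'\|_2 \le f(w)+f(w')$ and taking the maximum over $n$. Definiteness uses the hypothesis directly: $f(w)=0$ forces $T_n w=0$ for every $n$, which the assumption rules out unless $w=0$.

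\medskip

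For the dual, I would first reparametrise the candidate expression. Since $v_n\in\mathrm{Range}(T_n)$ iff $v_n=T_n x_n$ for some $x_n\in\R^d$, and among all such $x_n$ the one with smallest $2$-norm is $x_n=T_n^+ v_n$, it follows that
\[
g(u) \;=\; \inf\!\left\{\sum_{n=1}^N \|x_n\|_2 \,:\, \sum_{n=1}^N T_n x_n = u,\; x_n\in\R^d\right\}.
\]
From this form the Fenchel conjugate is a routine calculation. Absorbing the linear constraint by swapping the order with the outer supremum, and using that each $T_n$ is symmetric so that $\langle T_n x_n, w\rangle = \langle x_n, T_n w\rangle$, we get
\[
g^*(w) \;=\; \sup_{x_1,\dots,x_N} \sum_{n=1}^N \bigl(\langle x_n, T_n w\rangle - \|x_n\|_2\bigr) \;=\; \sum_{n=1}^N (\|\cdot\|_2)^*(T_n w) \;=\; \sum_{n=1}^N \iota_{B_2}(T_n w),
\]
using that the Fenchel conjugate of the Euclidean norm is the indicator of its dual unit ball $B_2$. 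This sum of indicators is exactly $\iota_{C}(w)$ where $C=\{w:f(w)\le 1\}$ is the unit ball of $f$. Taking the conjugate once more, $g^{**}(u)=\sigma_C(u)=f_*(u)$, the dual norm of $f$.

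\medskip

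The main obstacle is the final invocation of Fenchel--Moreau, which requires $g$ to be proper, convex and lower semicontinuous so that $g=g^{**}$. Properness and finiteness everywhere follow from the hypothesis: $\bigcap_n \ker T_n=\{0\}$ combined with the symmetry of each $T_n$ gives $\sum_n \mathrm{Range}(T_n)=\R^d$, so the constraint $\sum_n T_n x_n=u$ is always feasible. Convexity follows because $g$ is the value function of a convex program (equivalently, an infimal convolution of convex functions). Lower semicontinuity is the delicate point; I would argue it by noting that the objective $\sum_n\|x_n\|_2$ is coercive on the quotient $\R^{Nd}/\ker(\sum_n T_n\cdot)$, so the infimum is attained and closedness follows from a standard argument on the continuity of value functions of feasible coercive convex programs. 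Alternatively, one can bypass biconjugation entirely by observing that $f$ is the support function of $B:=\bigcup_n T_n(B_2)$, hence $f_*$ is the gauge of $\overline{\mathrm{conv}}(B)$; unpacking an element of $\overline{\mathrm{conv}}(B)$ as $\sum_n \lambda_n T_n z_n$ with $\|z_n\|_2\le 1$ and setting $v_n=T_n(\lambda_n z_n)$ recovers precisely the stated infimum.
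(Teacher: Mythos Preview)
Your proposal is correct, and the argument for the norm axioms is essentially identical to the paper's. For the dual norm, however, you take a genuinely different route.

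The paper argues directly: given any admissible decomposition $w=\sum_n v_n$ with $v_n\in\mathrm{Range}(T_n)$, it writes $v_n = T_n T_n^+ v_n$, uses symmetry of $T_n$ to get $\langle w,u\rangle = \sum_n \langle T_n^+ v_n, T_n u\rangle$, and then applies Cauchy--Schwarz and the constraint $\max_n\|T_n u\|_2\le 1$ to bound the dual norm of $w$ above by $\sum_n\|T_n^+ v_n\|_2$. Tightness is established by an explicit choice of $u$ (normalised so that $\|T_{n^*}u\|_2=1$ at a maximising index $n^*$) and a corresponding one-term decomposition. This is elementary and self-contained but needs a concrete witness for equality.

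Your approach instead packages the candidate as an infimal convolution, computes its conjugate to land on the indicator of the $f$-unit ball, and then invokes Fenchel--Moreau. This is more systematic and makes the structure (gauge/support duality) transparent; it also generalises painlessly to other building blocks than $\|\cdot\|_2$. One simplification you can make: having shown $g$ is convex and finite on all of $\R^d$ (which follows from your argument that $\sum_n\mathrm{Range}(T_n)=\R^d$), lower semicontinuity is immediate, since a finite convex function on $\R^d$ is automatically continuous. So the ``delicate point'' dissolves once finiteness is in hand, and you need not appeal to coercivity on the quotient.
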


\begin{proof}
We first show that \eqref{eqn:max-of-norms-is-norm} defines a norm.  
Positivity and homogeneity are immediate.  
Non-degeneracy ($\Vert w \Vert = 0 \Leftrightarrow w=0$) follows from the assumption that for every $w \in \R^d$, $w \neq 0$, there exists $T_n$ such that $T_n w \neq 0$. 
To show the triangle inequality, note
\begin{align*}
\max_{n=1}^N \Vert T_n(u+v) \Vert_2 &\leq \max_{n=1}^N \left( \Vert T_n u \Vert_2 + \Vert T_n v \Vert_2\right)\\
&\leq \max_{n=1}^N  \Vert T_n u \Vert_2 + \max_{n=1}^N \Vert T_n v \Vert_2,
\end{align*}
and we are done.  
To derive the dual norm we need to compute 
\begin{align*}
\Vert w\Vert_* = \sup_{u} \{ \langle w,u\rangle: \Vert T_n^+ u \Vert_2 \leq 1,~n=1,\dots,N \}.
\end{align*}
The same assumption implies that $T=\sum_{n-1}^N T_n$ has non trivial nullspace, hence is invertible, and 
\begin{align*}
w = T^{-1} \sum_{n=1}^N T_n w = \sum_{n=1}^N T^{-1} T_n w = \sum_{n=1}^N v_n,
\end{align*}
where $v_n = T^{-1}T_n w$. 
Letting $v=(v_1, \ldots, v_N)$, noting that for $v_n \in \textrm{Range}(T_n)$, we have $v_n=T_n T_n^+ v_n$, and using symmetry of $T_n$, we have 
\begin{align*}
\langle w,u\rangle &=  \sum_{n=1}^N \langle T_n T_n^+ v_n,u\rangle \\
&= \sum_{n=1}^N \langle T_n^+ v_n, T_n u \rangle  \\
&\leq \sum_{n=1}^N \Vert T_n^+ v_n\Vert_2 \Vert T_n u\Vert_2 \\
&\leq \sum_{n=1}^N \Vert T_n^+ v_n \Vert_2,
\end{align*}
where the penultimate inequality follows by Cauchy Schwarz.  
Furthermore all inequalities are tight for the choice $v_n = T_n^+ u$, if $u \in \argmax_{n=1}^N \Vert T_n u\Vert_2$, and $v_n =0$ otherwise. The result follows.
\end{proof}

Note that the assumption that $T_n \in S_d$ is not restrictive since the matrices in eq. \eqref{eqn:max-of-norms-is-norm} inside the Frobenius norm appear in the form $T_n\trans T_n$. 
For diagonal matrices we have the following corollary of Lemma \ref{lem:max-of-norms}. 

\begin{corollary}\label{cor:diag-Tn}
Let $T_n = {\rm diag}(t_n)$ for some $t_n \in \R^d_+$ and assume for every $w \in \R^d$, $w \neq 0$, there exist $n$ such that $T_n w \neq 0$. Then 
$$
\max_{n=1}^N \|T_n u\|_2
$$
is a norm and the dual norm is given by 
$$
\inf_{v \in \V} \left\{\sum_{n=1}^N \|T_n^+ v_n\|_2 : \sum_{n=1}^N v_n = w\right\}
$$
where $\V = \{(v_n)_{n=1}^N: v_n \in \R^n, {\rm supp}(v_n ) \subseteq {\rm supp}(t_n)\}$ and $T_n^+$ is the pseudo-inverse of $T_n$.
\end{corollary}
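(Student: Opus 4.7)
The plan is to derive this corollary as a direct specialization of Lemma~\ref{lem:max-of-norms}. First I would verify the hypotheses: since $t_n \in \R^d_+$, each $T_n = {\rm diag}(t_n)$ is diagonal with real (non-negative) entries, hence symmetric, so $T_n \in S_d$. The non-degeneracy assumption (for every nonzero $w$ there exists $n$ with $T_n w \neq 0$) is identical in both statements. Therefore Lemma~\ref{lem:max-of-norms} applies directly and yields both that $\max_{n=1}^N \|T_n u\|_2$ is a norm and that its dual is
\begin{equation*}
\inf_{v} \left\{ \sum_{n=1}^N \|T_n^+ v_n\|_2 : \sum_{n=1}^N v_n = w,~ v_n \in \textrm{Range}(T_n) \right\}.
\end{equation*}

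The only remaining work is to translate the range constraint $v_n \in \textrm{Range}(T_n)$ into the support constraint ${\rm supp}(v_n) \subseteq {\rm supp}(t_n)$. This step is routine: since $T_n$ is diagonal, $T_n e_i = (t_n)_i e_i$, which equals $(t_n)_i e_i$ if $i \in {\rm supp}(t_n)$ and $0$ otherwise. Hence
\begin{equation*}
\textrm{Range}(T_n) = \mathrm{span}\{ e_i : i \in {\rm supp}(t_n) \} = \{ v \in \R^d : {\rm supp}(v) \subseteq {\rm supp}(t_n) \},
\end{equation*}
which is precisely the set $\V$ appearing in the corollary's statement. Substituting this characterization into the dual formula from the lemma yields the claimed expression.

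There is no real obstacle here, since the corollary is essentially a notational rewriting of the lemma for the diagonal case. The only point worth a brief remark is that on $\textrm{Range}(T_n)$ the pseudo-inverse $T_n^+$ acts as the ordinary diagonal inverse on ${\rm supp}(t_n)$, namely $(T_n^+ v_n)_i = (v_n)_i/(t_n)_i$ for $i \in {\rm supp}(t_n)$ and zero elsewhere, so $\|T_n^+ v_n\|_2$ is well-defined for all $v_n \in \V_n$. Hence the infimum in the dual is meaningful as stated, and the proof is complete.
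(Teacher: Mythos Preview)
Your proposal is correct and matches the paper's intent: the paper presents this result as an immediate corollary of Lemma~\ref{lem:max-of-norms} without giving an explicit proof, and your argument supplies precisely the routine verification (symmetry of diagonal matrices and the identification $\textrm{Range}(T_n) = \{v : {\rm supp}(v) \subseteq {\rm supp}(t_n)\}$) needed to make that specialization rigorous.
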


We can now prove Proposition \ref{prop:infconv-abc}. 

\begin{proof}[Proof of Proposition \ref{prop:infconv-abc}]
Note that Corollary \ref{cor:diag-Tn} applies to the $\Theta$-norm in the case that the closure of $\Theta$ has a finite number of extreme points  $t_1,\dots,t_N$.  
Now observe that the extreme points of the set $\Theta$ in equation \eqref{eqn:theta-abc-def} for the given choice of $a,b$ and $c$ are of the form $1 a + 1_g (b-a)$ for $g \in \G_k$, where recall $\G_k$ is the set of all subsets of $\{1,\dots,d\}$ of cardinality $k$, $1$ is the vector of all ones, and $(1_g)$ is the indicator vector for set $g$. 
Let $N= |\G_k|$ and identify $n$ with $g$. Note for this choice that, for $x \in \R^d$
$$
\|T_g  x\|_2 = \sqrt{\sum_{i \in g}^d \frac{x_i^2}{b} + \sum_{i \notin g}^d \frac{x_i^2}{a}},
$$
and the proposition is proved. 
\end{proof}

The following result illustrates the relationship between the unit balls relating to $\Vert T_n \cdot \Vert$ and the unit ball of $\max_n \Vert T_n \cdot \Vert_2$ as defined in eq. \eqref{eqn:max-of-norms-is-norm}.

\begin{proposition}
Let $\Vert \cdot \Vert$ be the norm defined on $\mathbb{R}^d$ by 
\begin{align*}
\Vert w \Vert &= \inf_{ v \in \mathcal{V} } \left\{ \sum_{n=1}^N \Vert T_n^+ v_n \Vert : \sum_{n=1}^N v_n = w, v_n \in \textrm{Range}(T_n)  \right\},
\end{align*}
for $T_n \in S_d$ with pseudoinverse $T_n^+$.  
Let $B = \{ w: \Vert w \Vert \leq 1 \}$ be the corresponding unit ball, and let $A= \textrm{co}\left(\bigcup_{n=1}^N A_n\right)$, where 
\begin{align*}
A_n = \{ w: \Vert T_n^+ w \Vert \leq 1, w \in \textrm{Range}(T_n) \}.
\end{align*}  
Then $B = A$.
\end{proposition}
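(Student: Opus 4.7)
The plan is to prove the two set inclusions $A \subseteq B$ and $B \subseteq A$ separately. This is the familiar statement that the unit ball of an infimal convolution of norms equals the convex hull of the union of the individual unit balls, adapted here to the fact that the $n$-th summand norm $\|T_n^+ \cdot\|$ lives on the subspace $\mathrm{Range}(T_n)$. The hypotheses in Lemma \ref{lem:max-of-norms} ensure that $\|\cdot\|$ really is a norm and that $B$ is a well-defined convex body.

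For the inclusion $A \subseteq B$, I would first verify that each $A_n \subseteq B$: given $w \in A_n$, take the trivial decomposition $v_n = w$ and $v_m = 0$ for $m \neq n$. Since $w \in \mathrm{Range}(T_n)$ by definition of $A_n$ and $0 \in \mathrm{Range}(T_m)$ for every $m$, this is admissible, and the definition of $\|\cdot\|$ gives
\[
\|w\| \leq \|T_n^+ w\| + \sum_{m \neq n} \|T_m^+\, 0\| = \|T_n^+ w\| \leq 1,
\]
so $w \in B$. Because $B$ is the unit ball of a norm, it is convex, hence $B \supseteq \mathrm{co}\bigl(\bigcup_n A_n\bigr) = A$.

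For the reverse inclusion $B \subseteq A$, I would proceed by a near-optimal decomposition and a compactness argument. Fix $w \in B$ and $\varepsilon > 0$. By the definition of $\|w\|$ as an infimum, choose $(v_n)_{n=1}^N$ with $v_n \in \mathrm{Range}(T_n)$, $\sum_n v_n = w$, and $\sum_n \|T_n^+ v_n\| \leq \|w\| + \varepsilon \leq 1 + \varepsilon$. Set $\lambda_n = \|T_n^+ v_n\|$ and, when $\lambda_n > 0$, let $u_n = v_n / \lambda_n$, so that $u_n \in A_n$; when $\lambda_n = 0$, the convention $v_n = 0$ lets us drop that term. This writes $w$ as a nonnegative combination $\sum_n \lambda_n u_n$ with $\sum_n \lambda_n \leq 1 + \varepsilon$ and each $u_n \in A_n$. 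Since $0 \in A_n$ for every $n$, I can pad with a zero vector to turn the combination into a convex combination of elements of $\bigcup_n A_n$ after rescaling $w$ by $(1+\varepsilon)^{-1}$, which puts $w/(1+\varepsilon) \in A$.

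To close the argument I need $A$ to be closed so I can let $\varepsilon \to 0$. On the subspace $\mathrm{Range}(T_n)$ the map $x \mapsto \|T_n^+ x\|$ is a genuine norm (its kernel on that subspace is $\{0\}$), so $A_n$ is a closed and bounded subset of a finite-dimensional space, hence compact. The finite union $\bigcup_n A_n$ is therefore compact, and in $\mathbb{R}^d$ the convex hull of a compact set is compact (by Carathéodory), so $A$ is closed. Passing to the limit $\varepsilon \to 0$ gives $w \in A$. The main obstacle is exactly this last step: since the infimum in the definition of $\|\cdot\|$ may not be attained in general, one has to justify the limit, which is where the compactness of each $A_n$ on its supporting subspace and Carathéodory's theorem do the real work.
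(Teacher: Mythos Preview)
Your proof is correct and follows essentially the same approach as the paper: show $A_n \subseteq B$ and use convexity of $B$ for one inclusion, and for the other take a near-optimal decomposition, normalize to land in $A$, and pass to the limit. In fact your version is more careful than the paper's, which asserts $\sum_n \lambda_n^k = 1$ (it need only tend to $\|w\|\le 1$) and writes ``taking the limit we get the desired result'' without justifying the closedness of $A$; your compactness-via-Carath\'eodory argument fills exactly that gap.
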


\begin{proof}
We first show $A \subset B$.  
Let $w \in A$.  By definition of the convex hull, there exist $w_n$ and $\lambda_n \in [0,1]$ ($n=1, \ldots, N$) such that $w_n \in \textrm{Range}(T_n)$, $w= \sum_{n=1}^N \lambda_n w_n$, and $\sum_{n=1}^N \lambda_n = 1$. 
Defining $\hat{v}_n = \lambda_n w_n$, we have $w= \sum_{n=1}^N v_n$.  It follows that
\begin{align*}
\Vert w \Vert &= \inf_{\{ v_n \}} \left\{ \sum_{n=1}^N \Vert T_n^+ v_n \Vert : \sum_{n=1}^N v_n = w, v_n \in \textrm{Range}(T_n)  \right\}\\
&\leq  \sum_{n=1}^N \Vert T_n^+ \hat{v}_n \Vert =  \sum_{n=1}^N \lambda_n \Vert T_n^+ w_n \Vert \leq \sum_{n=1}^N \lambda_n = 1.
\end{align*}

To show $B \subset A$, let $w \in B$, so $\Vert w \Vert \leq 1$.
Then there exists a sequence $\{v^k_1, \ldots, v^k_n\}$ such that for each $k$, $\sum_{n=1}^N v^k_n  = w$ and $v^k_n \in \textrm{Range}(T_n)$, and we have 
\begin{align*}
\Vert w \Vert = \lim_{k \rightarrow \infty} \sum_{n=1}^N \Vert T_n^+ v^k_n \Vert.
\end{align*}
Let $w^k_n = \frac{v^k_n}{\Vert T_n^+ v^k_n \Vert}$ and $\lambda_n^k = \Vert T_n^+ v^k_n \Vert$.  
Then for all $k$, $\sum_{n=1}^N \lambda^k_n = 1$, $\sum_{n=1}^N \lambda^k_n w^k_n = w$, and $w^k_n \in A_n$, and taking the limit we get the desired result. 
\end{proof}

\begin{proof}[Proof of Theorem \ref{thm:computation-of-theta-norm}]
We solve the constrained optimization problem
\beq
\inf_{\theta} \bigg\{ \sum_{i=1}^d \frac{w_i^2}{\theta_i} : a \leq \theta_i \leq b, \sum_{i=1}^d \theta_i \leq c \bigg\}.  
\label{eqn:norm-objective-supp}
\eeq
To simplify notation we assume without loss of generality that the $w_i$ are ordered non increasing, and it follows by Lemma \ref{lem:ordering} that the $\theta_i$ are ordered non increasing.  
We further assume without loss of generality that $w_i \ne 0$ for all $i$, and $c \leq db$ (see Remark \ref{rem:wi-not-zero}). 
The objective is continuous and we take the infimum over a closed bounded set, so a solution exists, the solution is a minimum, and it is unique by strict convexity.  
Furthermore, since $c \leq db$, the sum constraint will be tight at the optimum.

The Lagrangian is given by 
\begin{align*}
L(\theta,\alpha) &= \sum_{i=1}^d \frac{w_i^2}{\theta_i}  + \frac{1}{\alpha^2} \left( \sum_{i=1}^d \theta_i-c\right)
\end{align*}
where $1/\alpha^2$ is a strictly positive multiplier to be chosen to make the sum constraint tight, call this value $\alpha^*$.
Now let $\theta^*$ be the minimizer of $L(\theta, \alpha^*) $ over $\theta$ subject to $a\leq \theta_i \leq b$. 
Then $\theta^*$ solves equation \eqref{eqn:norm-objective-supp}.  
To see this, note that we have for any $\theta$, $L(\theta^*,\alpha^*) \leq L(\theta, \alpha^*)$, so we have
\begin{align*}
\sum_{i=1}^d \frac{w_i^2}{\theta^*_i}   \leq
\sum_{i=1}^d \frac{w_i^2}{\theta_i}  + \frac{1}{(\alpha^*)^2} \left( \sum_{i=1}^d \theta_i-c\right).
\end{align*}
As this holds for all $\theta \in [a,b]^d$, then it holds for all such $\theta$ where additionally the constraint $\sum_{i=1}^d \theta_i \leq c$ holds.  
In this case the second term on the right hand side is at most zero, so we have for all such $\theta$
\begin{align*}
\sum_{i=1}^d \frac{w_i^2}{\theta^*_i}   \leq \sum_{i=1}^d \frac{w_i^2}{\theta_i},
\end{align*}
whence it follows that $\theta^*$ is the minimizer of \eqref{eqn:norm-objective}.

We can therefore solve the original problem by minimizing the Lagrangian over the box constraint.
Due to the coupling effect of the multiplier, the problem is separable, and we can solve the simplified problem componentwise using Lemma \ref{lem:mmp-box-solution}.
It follows that 
\begin{align}
\theta_i = 
\begin{cases}
a, \quad &\text{if} \quad \alpha < \frac{a}{\vert w_i \vert},\\
\alpha \vert w_i \vert, \quad &\text{if} \quad \frac{a}{\vert w_i \vert}\leq \alpha \leq \frac{b}{\vert w_i \vert} ,\\
b, \quad &\text{if}\quad  \alpha > \frac{b}{\vert w_i \vert} ,
\end{cases}\label{eqn:form-of-optimal-theta-abc-supp}
\end{align}
where $\alpha>0$ is such that $\sum_{i=1}^d \theta_i(\alpha) = c$. 
The minimizer then has the form 
\begin{align*}
\theta = (\underbrace{b, \ldots, b}_q, \theta_{q+1}, \ldots, \theta_{d-\ell}, \underbrace{a, \ldots, a}_{\ell})
\end{align*}
where $q,\ell \in \{0, \ldots, d\}$ are determined by the value of $\alpha$, which satisfies
\begin{align*}
S(\alpha) = \sum_{i=1}^d \theta_i(\alpha) =q b + \sum_{i=q+1}^{d-\ell} \alpha \vert w_i \vert +  \ell a = c,
\end{align*}
i.e. $\alpha = \frac{p}{\sum_{i=q+1}^{d-\ell}\vert w_i \vert}$ where $p=c-qb -\ell a$.
The value of the norm follows by substituting $\theta$ into the objective and we get
\begin{align*}
\Vert w \Vert_{\Theta}^2 
&= \sum_{i=1}^{q} \frac{\vert w_i \vert^2}{b^2} + 
\frac{1}{p} \Big(\sum_{i=q+1}^{d-\ell} \vert w_i\vert\Big)^2 +  \sum_{i=d-\ell+1}^{d} \frac{\vert w_i\vert^2}{a^2} \\
&= \frac{1}{b^2} \Vert w_Q \Vert_2^2  + 
\frac{1}{p} \Vert w_I\Vert_1^2 +
\frac{1}{a^2} \Vert w_L \Vert_2^2, 
\end{align*}
as required. 
We can further characterise $q$ and $\ell$ by considering the form of $\theta_i$.  
By construction we have $\theta_q \geq b >\theta_{q+1}$ and $\theta_{d-\ell} > a \geq \theta_{d-\ell+l}$, or equivalently
\begin{align*}
\frac{\vert w_q \vert}{b} &\geq \frac{1}{p}\sum_{i=q+1}^{d-\ell}\vert w_i \vert > \frac{\vert w_{q+1} \vert}{b}\\
\frac{\vert w_{d-\ell} \vert}{a} &\geq \frac{1}{p}\sum_{i=q+1}^{d-\ell}\vert w_i \vert > \frac{\vert w_{d-\ell+1} \vert}{a},
\end{align*}
as required. 
\end{proof}


\begin{remark}\label{rem:wi-not-zero}
The case where some $w_i$ are zero follows from the case that we have considered in the theorem.  
If $w_i=0$ for $n < i \leq d$, then clearly we must have $\theta_i=a$ for all such $i$.  
We then consider the $n$-dimensional problem of finding $(\theta_1, \ldots, \theta_{n})$ that minimizes $\sum_{i=1}^{n} \frac{w_i^2}{\theta_i}$, subject to $a \leq \theta_i \leq b$, and $\sum_{i=1}^{n} \theta_i \leq c'$, where $c'=c-(d-n)a$.  
As $c \geq da$ by assumption, we also have $c' < na$, so a solution exists to the $n$ dimensional problem.  
If $c' < bn$, then a solution is trivially $\theta_i=b$ for all $i=1\ldots n$.  
In general, $c' \geq bn$, and we proceed as per the proof of the theorem. 
Finally, a $\theta$ that solves the original $d$-dimensional problem will be given by $(\theta_1, \ldots, \theta_{n}, a, \ldots, a)$. 
\end{remark}



The following auxiliary result is used in Theorem \ref{thm:computation-of-theta-norm}.

\begin{lemma}\label{lem:ordering}
Let $w_1 \geq \ldots \geq w_d > 0$, $\Theta =\{\theta: a \leq \theta_i \leq b, \sum_{i=1}^d \theta_i \leq c\}$ ($0<a<b$), and let $\theta^* = \argmin_{\theta \in \Theta} \sum_{i=1}^d \frac{w_i^2}{\theta_i}$. 
Then $\theta_1^* \geq \ldots \geq \theta_d^*$.
\end{lemma}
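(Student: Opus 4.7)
My plan is to use a standard rearrangement/swap argument, leveraging the strict convexity of the objective. First, I would record two preliminary facts: (i) the feasible set $\Theta$ is closed, bounded, and nonempty (since $c \geq da$ is implicit from $\Theta \neq \emptyset$), and (ii) since $w_i > 0$ and $\theta_i > 0$, the function $\theta_i \mapsto w_i^2/\theta_i$ is strictly convex on $[a,b]$, so the objective $F(\theta) = \sum_{i=1}^d w_i^2/\theta_i$ is strictly convex on $\Theta$. Hence the minimizer $\theta^*$ exists and is unique.

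The core of the argument is a swap. Suppose for contradiction that there exist indices $i < j$ with $\theta_i^* < \theta_j^*$ (so in particular $w_i \geq w_j$ by the assumed ordering of $w$). Define $\tilde\theta \in \R^d$ by
\[
\tilde\theta_i = \theta_j^*, \qquad \tilde\theta_j = \theta_i^*, \qquad \tilde\theta_k = \theta_k^* \text{ for } k \notin \{i,j\}.
\]
Since both $\theta_i^*$ and $\theta_j^*$ lie in $[a,b]$, so do the swapped coordinates, and $\sum_k \tilde\theta_k = \sum_k \theta^*_k \leq c$, so $\tilde\theta \in \Theta$. A direct calculation gives
\[
F(\theta^*) - F(\tilde\theta) = \left(\frac{w_i^2}{\theta_i^*} + \frac{w_j^2}{\theta_j^*}\right) - \left(\frac{w_i^2}{\theta_j^*} + \frac{w_j^2}{\theta_i^*}\right) = (w_i^2 - w_j^2)\left(\frac{1}{\theta_i^*} - \frac{1}{\theta_j^*}\right) \geq 0,
\]
because $w_i^2 \geq w_j^2$ and $1/\theta_i^* > 1/\theta_j^*$.

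To close the argument I would handle two cases. If $w_i > w_j$, the inequality above is strict, so $F(\tilde\theta) < F(\theta^*)$, contradicting the optimality of $\theta^*$. If $w_i = w_j$, then $F(\tilde\theta) = F(\theta^*)$, but $\tilde\theta \ne \theta^*$ (they differ in coordinates $i$ and $j$ since $\theta_i^* < \theta_j^*$), contradicting uniqueness of the minimizer from strict convexity.

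The main subtlety (if one can call it that) is the tie case $w_i = w_j$: one cannot get a strict decrease from the swap, so one must invoke uniqueness rather than a direct improvement. Everything else is a routine rearrangement-style computation, so I expect no real obstacle beyond making sure strict convexity is available, which is guaranteed by the standing hypotheses $w_i > 0$ and $a > 0$.
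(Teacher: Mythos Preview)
Your proposal is correct and follows essentially the same swap/rearrangement argument as the paper's proof. The only presentational difference is that the paper invokes uniqueness of the minimizer up front to get the strict inequality $F(\theta^*) < F(\hat\theta)$ (since $\hat\theta \neq \theta^*$) and then derives $w_i < w_j$ directly, whereas you split into the two cases $w_i > w_j$ and $w_i = w_j$; the content is the same.
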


\begin{proof}
Let $i<j$, so $w_i \geq w_j$, and suppose that $\theta^*_i < \theta_j^*$.  Define $\hat{\theta}$ to have identical elements to $\theta^*$, except with the $i$ and $j$ elements exchanged.  
As $\theta^*$ is the unique $\argmin$ of a strictly convex function, comparing the objectives we have
\begin{align*}
\frac{w_i^2}{\theta_i^*} + \frac{w_j^2}{\theta_j^*} &< \frac{w_i^2}{\hat{\theta}_i} + \frac{w_j^2}{\hat{\theta}_i} = \frac{w_i^2}{\theta_j^*} + \frac{w_j^2}{\theta_i^*}\\
w_i^2 \left(\frac{1}{\theta_i^*} -\frac{1}{\theta_j^*} \right) &<  w_j^2 \left( \frac{1}{\theta_i^*} - \frac{1}{\theta_j^*}\right).
\end{align*}
Dividing by $\frac{1}{\theta_i^*} -\frac{1}{\theta_j^*}>0$ we get $w_i< w_j$, a contradiction.
\end{proof}


We further make use of the following result, which follows from \citep[Theorem 3.1]{Micchelli2011}.  

\begin{lemma}\label{lem:mmp-box-solution}
Let $0<a<b$, and let $f(\theta) = \sum_{i=1}^d \left(\frac{w_i^2}{\theta_i} + \beta^2 \theta_i\right)$ and consider $\min_{a\leq \theta_i \leq b} f(\theta)$.
Then the minimizer $\theta \in \mathbb{R}^d$ is given by 
\begin{align*}
\theta_i = 
\begin{cases}
a, \quad &\text{if } \frac{\vert w_i \vert}{\beta}< a,\\
\frac{\vert w_i \vert}{\beta}, &\text{if } a \leq \frac{\vert w_i \vert}{\beta} \leq b,\\
b, \quad &\text{if }  \frac{\vert w_i \vert}{\beta}> b.
\end{cases}
\end{align*}
\end{lemma}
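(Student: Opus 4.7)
The plan is to exploit the fact that the objective $f(\theta)=\sum_{i=1}^d\bigl(w_i^2/\theta_i + \beta^2\theta_i\bigr)$ is fully separable across coordinates, so the box-constrained minimization decouples into $d$ independent one-dimensional problems of the form
\[
\min_{a\le \theta_i\le b}\;g_i(\theta_i),\qquad g_i(t)=\frac{w_i^2}{t}+\beta^2 t.
\]
Thus it suffices to analyze the scalar problem $\min_{t\in[a,b]} g_i(t)$ for each $i$.

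First I would verify that $g_i$ is strictly convex on $(0,\infty)$: since $g_i''(t)=2w_i^2/t^3\ge 0$ (and strictly positive when $w_i\ne 0$), $g_i$ has a unique minimizer on $(0,\infty)$. Computing $g_i'(t)=-w_i^2/t^2+\beta^2$ and setting it to zero yields the unique unconstrained critical point $t_i^\star=|w_i|/\beta$. (If $w_i=0$, $g_i(t)=\beta^2 t$ is minimized at $t=a$, consistent with the first case since $|w_i|/\beta=0<a$.)

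Next I would handle the box constraint via the standard three-case argument for a strictly convex scalar function on an interval. Because $g_i'$ is strictly increasing, the sign of $g_i'$ on $[a,b]$ determines where the minimum lies:
\begin{itemize}
\item if $t_i^\star<a$, then $g_i'(t)>0$ on $[a,b]$, so $g_i$ is increasing and the minimum is at $t=a$;
\item if $a\le t_i^\star\le b$, the unconstrained minimizer lies in the feasible interval and gives the minimum, namely $t=|w_i|/\beta$;
\item if $t_i^\star>b$, then $g_i'(t)<0$ on $[a,b]$, so $g_i$ is decreasing and the minimum is at $t=b$.
\end{itemize}
Reassembling across coordinates gives the stated formula for $\theta$.

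There is no real obstacle here; the only thing to be careful about is the degenerate edge case $w_i=0$ (handled above) and to confirm that in the middle case the claimed value $|w_i|/\beta$ is indeed the boundary of the three regimes, so that the formula is continuous in $|w_i|/\beta$ and unambiguously defined at the thresholds $a$ and $b$.
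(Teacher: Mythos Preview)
Your proposal is correct and follows essentially the same approach as the paper: exploit separability, compute the derivative of the scalar function $g_i(t)=w_i^2/t+\beta^2 t$, locate the unconstrained minimizer $|w_i|/\beta$, and clip to $[a,b]$ via the three-case sign analysis of $g_i'$. If anything, your version is slightly more careful than the paper's, since you explicitly treat the degenerate case $w_i=0$ and note continuity at the thresholds.
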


\begin{proof}  
For fixed $w$, the objective function is strictly convex on $\mathbb{R}^d_{++}$ and has a unique minimum on $(0,\infty)$ (see Figure 1.b in \citep{Micchelli2011} for a one-dimensional illustration).  
The problem is separable and we solve
\begin{align*}
\min_{a \leq \theta_i \leq b} \frac{w_i^2}{\theta_i} + \beta^2,
\end{align*}
which has gradient with respect to $\theta_i$ given by $-\frac{w_i^2}{\theta_i^2}+\beta$.  
This is zero for $\theta_i = \frac{\vert w_i \vert}{\beta} =:\theta^*_i$, strictly negative below $\theta_i^*$ and strictly increasing above $\theta^*_i$.   
Considering these three cases we recover the expression in statement of the lemma.
\end{proof}

\bibliographystyle{plainnat}
\renewcommand{\bibname}{References} 
\bibliography{jabrefmaster.bib} 

\begin{thebibliography}{25}
\providecommand{\natexlab}[1]{#1}
\providecommand{\url}[1]{\texttt{#1}}
\expandafter\ifx\csname urlstyle\endcsname\relax
  \providecommand{\doi}[1]{doi: #1}\else
  \providecommand{\doi}{doi: \begingroup \urlstyle{rm}\Url}\fi

\bibitem[Argyriou et~al.(2008)Argyriou, Evgeniou, and Pontil]{Argyriou2008}
A.~Argyriou, T.~Evgeniou, and M.~Pontil.
\newblock Convex multi-task feature learning.
\newblock \emph{Machine Learning}, 73(3):\penalty0 243--272, 2008.

\bibitem[Argyriou et~al.(2011)Argyriou, Micchelli, Pontil, Shen, and
  Xu]{Argyriou2011}
A.~Argyriou, C.~A. Micchelli, M.~Pontil, L.~Shen, and Y.~Xu.
\newblock Efficient first order methods for linear composite regularizers.
\newblock \emph{CoRR}, abs/1104.1436, 2011.

\bibitem[Argyriou et~al.(2012)Argyriou, Foygel, and Srebro]{Argyriou2012}
A.~Argyriou, R.~Foygel, and N.~Srebro.
\newblock Sparse prediction with the k-support norm.
\newblock In \emph{Advances in Neural Information Processing Systems 25}, pages
  1466--1474, 2012.

\bibitem[Beck and Teboulle(2009)]{Beck2009}
A.~Beck and M.~Teboulle.
\newblock A fast iterative shrinkage-thresholding algorithm for linear inverse
  problems.
\newblock \emph{SIAM J. Imaging Sciences}, 2(1):\penalty0 183--202, 2009.

\bibitem[Bertsekas et~al.(2003)Bertsekas, Nedic, and Ozdaglar]{Bertsekas2003}
D.~P. Bertsekas, A.~Nedic, and A.~E. Ozdaglar.
\newblock \emph{Convex Analysis and Optimization}.
\newblock Athena Scientific, 2003.

\bibitem[Borwein and Lewis(2000)]{Borwein2000}
J.~M. Borwein and A.~S. Lewis.
\newblock \emph{Convex Analysis and Nonlinear Optimization}.
\newblock Springer-Verlag, 2000.

\bibitem[Boyd and Vandenberghe(2004)]{Boyd2004}
S.~Boyd and L.~Vandenberghe.
\newblock \emph{Convex Optimization}.
\newblock Cambridge University Press, 2004.

\bibitem[Cai et~al.(2008)Cai, Candes, and Shen]{Cai2008}
J.-F. Cai, E.~J. Candes, and Z.~Shen.
\newblock A singular value thresholding algorithm for matrix completion.
\newblock \emph{SIAM Journal on Optimization}, 20(4):\penalty0 1956--1982,
  2008.

\bibitem[Combettes and Pesquet(2011)]{Combettes2010}
P.~L. Combettes and J.-C. Pesquet.
\newblock Proximal splitting methods in signal processing.
\newblock In \emph{Fixed-Point Algorithms for Inv Prob}. Springer, 2011.

\bibitem[Evgeniou et~al.(2005)Evgeniou, Micchelli, and Pontil]{Evgeniou2005}
T.~Evgeniou, C.~A. Micchelli, and M.~Pontil.
\newblock Learning multiple tasks with kernel methods.
\newblock \emph{J. of Machine Learning Research}, 6:\penalty0 615--637, 2005.

\bibitem[Horn and Johnson(1991)]{Horn1991}
R.~A. Horn and C.~R. Johnson.
\newblock \emph{Topics in Matrix Analysis}.
\newblock Cambridge University Press, 1991.

\bibitem[Jacob et~al.(2009{\natexlab{a}})Jacob, Bach, and Vert]{Jacob2009a}
L.~Jacob, F.~Bach, and J.-P. Vert.
\newblock Clustered multi-task learning: a convex formulation.
\newblock \emph{Advances in Neural Information Processing Systems (NIPS 21)},
  2009{\natexlab{a}}.

\bibitem[Jacob et~al.(2009{\natexlab{b}})Jacob, Obozinski, and Vert]{Jacob2009}
L.~Jacob, G.~Obozinski, and J.-P. Vert.
\newblock Group lasso with overlap and graph lasso.
\newblock \emph{Proc of the 26th Int. Conf. on Machine Learning},
  2009{\natexlab{b}}.

\bibitem[Jaggi and Sulovsky(2010)]{Jaggi2010}
M~Jaggi and M.~Sulovsky.
\newblock A simple algorithm for nuclear norm regularized problems.
\newblock \emph{Proceedings of the 27th International Conference on Machine
  Learning}, 2010.

\bibitem[Lenk et~al.(1996)Lenk, DeSarbo, Green, and Young]{Lenk1996}
P.~J. Lenk, W.~S DeSarbo, P.~E. Green, and M.~R. Young.
\newblock Hierarchical bayes conjoint analysis: Recovery of partworth
  heterogeneity from reduced experimental designs.
\newblock \emph{Marketing Science}, 15(2):\penalty0 173--191, 1996.

\bibitem[Maurer and Pontil(2012)]{Maurer2012}
A.~Maurer and M.~Pontil.
\newblock Structured sparsity and generalization.
\newblock \emph{The Journal of Machine Learning Research}, 13:\penalty0
  671--690, 2012.

\bibitem[Mazumder et~al.(2010)Mazumder, Hastie, and Tibshirani]{Mazumder2010}
R.~Mazumder, T.~Hastie, and R.~Tibshirani.
\newblock Spectral regularization algorithms for learning large incomplete
  matrices.
\newblock \emph{Journal of Machine Learning Research}, 11:\penalty0 2287--2322,
  2010.

\bibitem[Micchelli and Pontil(2005)]{Micchelli2005}
C.~A. Micchelli and M.~Pontil.
\newblock Learning the kernel function via regularization.
\newblock \emph{Journal of Machine Learning Research}, 6:\penalty0 1099--1125,
  2005.

\bibitem[Micchelli et~al.(2013)Micchelli, Morales, and Pontil]{Micchelli2011}
C.~A. Micchelli, J.~M. Morales, and M.~Pontil.
\newblock Regularizers for structured sparsity.
\newblock \emph{Advances in Comp. Mathematics}, 38:\penalty0 455--489, 2013.

\bibitem[Nesterov(2007)]{Nesterov2007}
Y.~Nesterov.
\newblock Gradient methods for minimizing composite objective function.
\newblock \emph{Center for Operations Research and Econometrics}, 76, 2007.

\bibitem[Obozinski and Bach(2012)]{Obozinski2012}
G.~Obozinski and F.~Bach.
\newblock Convex relaxation for combinatorial penalties.
\newblock \emph{CoRR}, 2012.

\bibitem[Toh and Yun(2011)]{Toh2011}
K.-C. Toh and S.~Yun.
\newblock An accelerated proximal gradient algorithm for nuclear norm
  regularized least squares problems.
\newblock \emph{SIAM Journal on Imaging Sciences}, 4:\penalty0 573--596, 2011.

\bibitem[Von~Neumann(1937)]{VonNeumann1937}
J.~Von~Neumann.
\newblock \emph{Some matrix-inequalities and metrization of matric-space}.
\newblock Tomsk. Univ. Rev. Vol {I}, 1937.

\bibitem[Yuan and Lin(2006)]{Yuan2006}
M.~Yuan and Y.~Lin.
\newblock Model selection and estimation in regression with grouped variables.
\newblock \emph{Journal of the Royal Statistical Society, Series B},
  68(I):\penalty0 49--67, 2006.

\bibitem[Zou and Hastie(2005)]{Zou2005}
H.~Zou and T.~Hastie.
\newblock Regularization and variable selection via the elastic net.
\newblock \emph{Journal of the Royal Statistical Society, Series B},
  67(2):\penalty0 301--320, 2005.

\end{thebibliography}

\end{document}